\documentclass{article}

\usepackage[preprint]{neurips2026}

\usepackage{amsmath}
\usepackage[utf8]{inputenc} 
\usepackage[T1]{fontenc}    
\usepackage{url}            
\usepackage{booktabs}       
\usepackage{amsfonts}       
\usepackage{nicefrac}       
\usepackage{microtype}      
\usepackage[table]{xcolor}  
\usepackage{mdframed}
\usepackage{graphicx}
\usepackage{subcaption}
\usepackage{tabularx}
\usepackage{amssymb}
\usepackage{mathtools}
\usepackage{amsthm}
\usepackage{titletoc}
\usepackage{algorithmic}
\usepackage{algorithm}
\usepackage{multirow}
\usepackage{hyperref}       
\usepackage[capitalize]{cleveref}  

\title{Margin-calibrated Classifier Guidance \\for Property-driven Synthesis Planning}

\author{%
  Najwa Laabid \\
  Department of Computer Science \\
  Aalto University \\
  \texttt{najwa.laabid@aalto.fi} \\
  \And
  Vikas Garg \\
  Department of Computer Science \\ 
  Aalto University\\ 
  YaiYai Ltd\\
  \texttt{vgarg@csail.mit.edu} \\
}

\newcommand{\rvec}{\mathbf{r}}
\DeclareMathOperator*{\argmax}{arg\,max}
\definecolor{highlightgreen}{RGB}{220,240,220}  
\newcommand{\MYCOMMENT}[1]{\hfill \textcolor{gray}{\textit{// #1}}}

\newtheorem{proposition}{Proposition}

\newtheorem{remark}{Remark}
\newtheorem*{theorem*}{Theorem}
\newtheorem*{proposition*}{Proposition 1}

\crefname{observation}{Observation}{Observations}

\begin{document}

\maketitle

\begin{abstract}
    Synthesis planning seeks an efficient sequence of chemical reactions that produce a target molecule. 
    Typically, a pretrained single-step (autoregressive) retrosynthesis model is repeatedly invoked to generate such a 
    sequence. Classifier guidance can, in principle, help steer the output of single-step model toward reactions that 
    satisfy specific constraints or accommodate chemist's preferences during inference without having to retrain the 
    autoregressive generator. We expose the insufficiency of auxiliary classifiers trained with cross-entropy loss to 
    override the unconditional token-level distributions learned from typical sparse single-disconnection reaction datasets. 
    We overcome this issue with a novel method called Sequence Completion Ranking (SCR), which employs contrastive 
    argumentation and a margin-based loss to calibrate the classifier so that it can meaningfully discriminate between 
    continuations during decoding. We formally establish that margin-calibrated classifiers can expand the set of 
    property-satisfying sequences reachable under guided beam search. Empirically, on USPTO-190, given chemist-specified guidance targets, SCR substantially
    improves multi-step solve rates from $16.8\%$ (unguided generator) to $78.4\%$ with reaction-type guidance
    and $95.3\%$ with Tanimoto guidance, unlocking valid routes for 33 targets ($17.4\%$) previously unsolvable
    with baselines. Our method also effectively closes the long-standing diversity gap between template-free and 
    template-based methods.
\end{abstract}



\section{Introduction}
Retrosynthesis, the task of predicting which precursor molecules can produce a given target, is a cornerstone of synthetic chemistry and drug discovery.
Data-driven single-step models have made remarkable progress~\citep{segler2017neuralsym,Segler2018MCTS,schwaller2019molecular,sacha2021megan,Zhong2022Rsmiles,chen2021localretro,seidl2021mhnreact}, yet chemists rarely seek \emph{any} viable precursors: a medicinal chemist may favor robust C--C cross-couplings for late-stage diversification~\citep{Brown2016ReactionsGone,Roughley2011Toolbox}, an industrial chemist may require routes through specific, low-cost starting materials, and a process chemist may avoid reactions with hazardous intermediates or pyrophoric reagents~\citep{Constable2007GreenChem}.
Current models are largely agnostic to such preferences.
Existing approaches either retrain the generator per property (expensive and brittle to constraint changes), or filter post-hoc from a fixed sample pool, leaving sequences outside the base model's beam unreachable.
Classifier guidance~\citep{Yang2021fudge,Meng2022Nado} offers a lightweight alternative: steer at inference time using an auxiliary signal, leveraging the generative prior of an existing model. \looseness=-1

We study \emph{steerable single-step retrosynthesis}: given a target molecule and a desired property (e.g.\ a reaction type or a starting material constraint), generate precursors satisfying the property while remaining chemically valid.
Our approach applies classifier guidance to an autoregressive SMILES-based model: a token-level classifier predicts whether the completed reaction will satisfy the target property and reranks candidates during beam search (\cref{eq:token-guidance}), steering generation at inference time without retraining. \looseness=-1

For this guidance to work, the classifier must assign meaningfully different probabilities to token continuations leading toward versus away from the target property, a quantity we term \emph{token-level discriminability}.
We show that classifiers trained with cross-entropy cannot guarantee this on retrosynthesis data: the data regime needed for a CE estimator to clear the discriminability gap requires per-prefix coverage of rare token--class combinations that single-disconnection reaction corpora structurally lack (\cref{sec:ce-shortcomings}).
To overcome this, we propose \emph{Sequence Completion Ranking} (SCR), combining \emph{contrastive augmentation} (replacing the final token of partial sequences with random alternatives to force token-level sensitivity) with a \emph{margin-based ranking loss} that directly enforces the discriminability gap required by guidance.
We further prove that classifiers satisfying these per-step requirements provably expand the set of sequences reachable under beam search (\cref{proposition:reachability}). \looseness=-1

We evaluate SCR on reaction type steering and starting material conditioning.
On USPTO-190-Steps (640 products), SCR achieves the highest steering breadth among all baselines including template-based methods, recovers valid steered precursors for 231 of the 287 products on which the unguided generator fails entirely, and produces samples with only 39\% Jaccard overlap with unguided generation.
In multi-step search on USPTO-190 with chemist-specified targets per node, reaction-type guidance lifts the Retro* solve rate from 16.8\% (unguided generator) to 78.4\%,
exceeding the best template-free baseline by 12.6\,pp, and Tanimoto guidance reaches 95.3\% with the highest route diversity overall.
Across both signals, guidance unlocks valid routes for 33 targets (17.4\%) that \emph{no} baseline solves, and remains competitive with search-level steering when the two are combined. \looseness=-1

\begin{itemize}
    \item We prove that cross-entropy training cannot deliver the \emph{token-level discriminability} guidance requires on retrosynthesis data, where single-disconnection corpora structurally lack the rare-cell coverage the CE estimator needs (\cref{sec:ce-shortcomings}).
    \item We propose \emph{Sequence Completion Ranking} (SCR), pairing contrastive augmentation with a margin-based ranking loss that calibrates classifiers to clear the discriminability gap by construction (\cref{sec:calibration}), and prove that SCR-trained classifiers expand the set of property-satisfying sequences reachable under guided beam search (\cref{proposition:reachability}).
    \item Empirically, on USPTO-190-Steps SCR achieves the highest steering breadth among all baselines (template-based and template-free); in multi-step search, reaction-type guidance lifts the Retro* solve rate by $4.7{\times}$ over the unguided generator and Tanimoto guidance reaches the highest solve rate and route diversity overall, jointly unlocking valid routes for 33 targets (17.4\%) that no baseline solves.
\end{itemize}

Our anonymized code release is available at \url{https://anonymous.4open.science/r/scr-submission-FB82/} and the trained classifier checkpoints are archived at \url{https://osf.io/nyazq/?view_only=0fb8e747409c4067ba2299124bcf7b46}.

\begin{figure}[t]
    \centering
    \resizebox{0.60\textheight}{!}{
    \includegraphics[width=0.9\textwidth]{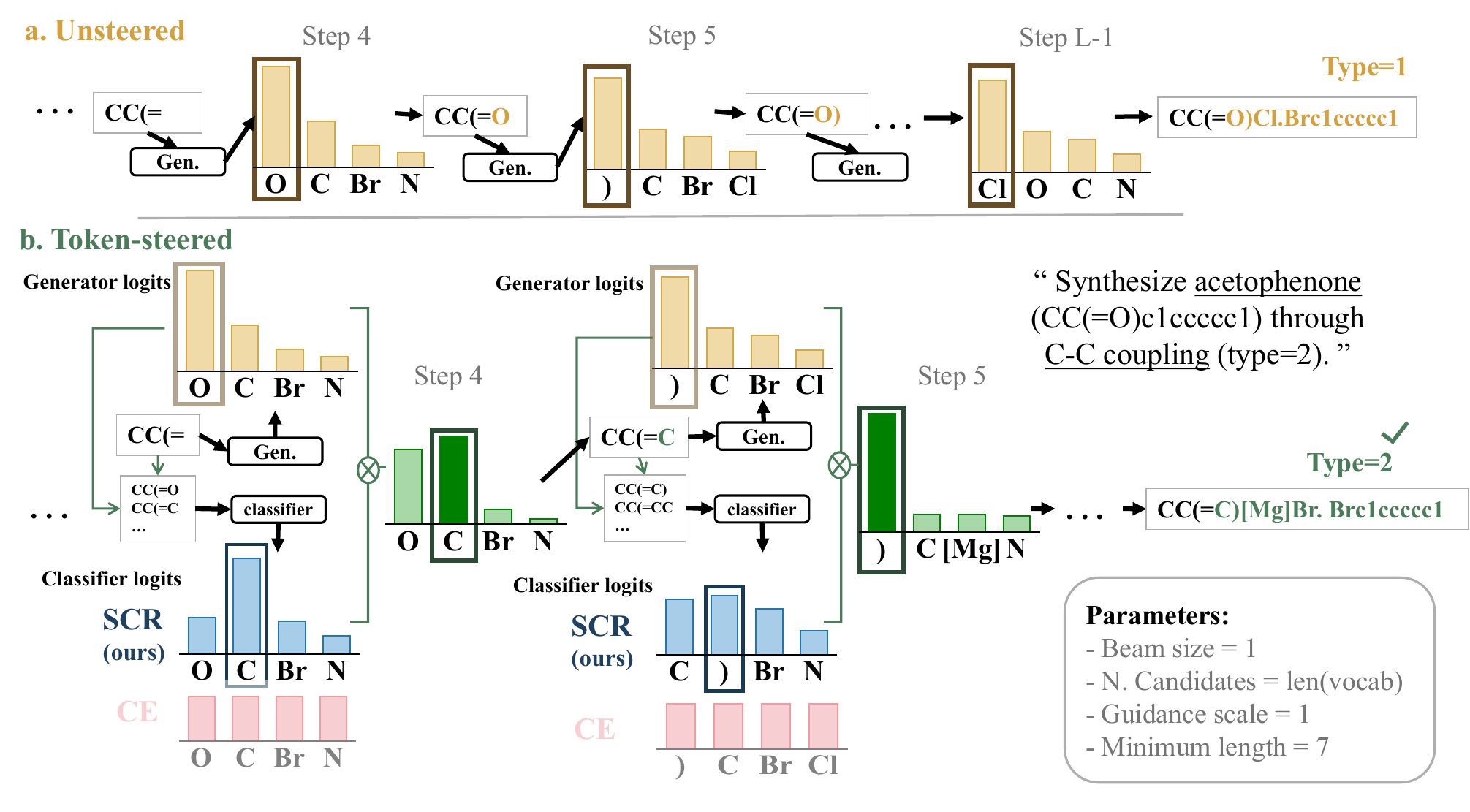}
    }
\caption{Autoregressive generation under three configurations: unguided generator, generator + CE-trained classifier, and generator + SCR-trained classifier. CE-guided decoding produces the same outcome as the unguided generator (\cref{sec:ce-shortcomings}), whereas SCR steers generation toward the target reaction type (C--C coupling).}
\label{fig:token-reranking}
\end{figure}

\section{Preliminaries}
\label{sec:preliminaries}

\subsection{Single-step retrosynthesis}
\label{sec:single-step-retro}

Single-step retrosynthesis is the task of predicting plausible precursor molecules (reactants) for a given target product.
Formally, we define $\mathcal{M}$ as the space of all valid molecules and represent a reaction as a pair $R = (\rvec, p)$, 
where $\rvec \subset \mathcal{M}$ is the set of reactants and $p \in \mathcal{M}$ is the product.
A template-free single-step model parameterizes the conditional distribution $p_\theta(\rvec \mid p)$,
generating reactant predictions directly without relying on predefined reaction templates. \looseness=-1

Following the protocol of \citet{Zhong2022Rsmiles}, we train our base generator with root-aligned SMILES, where products and reactants
are aligned to minimize their edit distance so that the underlying autoregressive model focuses on learning local chemical transformations.
Both products and reactants are represented as SMILES strings, and the generator emits the reactant string token by token,
conditioned on the product.
At inference we follow the augmentation protocol of \citet{Zhong2022Rsmiles}; details in \cref{app:lookahead-inference}.
Throughout the paper, we refer to this product-reactant-aligned autoregressive model simply as the \emph{generator}. \looseness=-1

In many practical settings, chemists seek not just \emph{any} valid set of precursors, but precursors satisfying 
additional constraints, such as proceeding via a specific reaction type or incorporating a particular starting material.
We refer to this as \emph{steerable} single-step retrosynthesis.
Prior work on steerable synthesis planning operates primarily at the search level in multi-step settings,
filtering single-step predictions \citep{Segler2018MCTS} or modifying value functions \citep{Armstrong2024TangoStar}.
At the single-step level, recent work steers retrosynthesis language models through disconnection prompts \citep{Thakkar2023Disconnection}
or LLM agents that wrap chemistry tools \citep{Bran2024ChemCrow}; both operate above the token level and inherit the base model's coverage. \looseness=-1

\subsection{Classifier guidance in autoregressive models}
\label{sec:classifier-guidance}

Let $\{\phi_\alpha\}_{\alpha=1}^n$ be a family of property predicates, with each $\phi_\alpha(\rvec, p) \in \{0,1\}$
indicating whether the reaction $(\rvec, p)$ satisfies property $\alpha$; we drop $(\rvec, p)$ when clear from context, writing $\phi_\alpha$.
For each $\alpha$ we learn a classifier $p_{\alpha}(\phi_\alpha = 1 \mid \rvec, p)$ that estimates $P(\phi_\alpha = 1 \mid \rvec, p)$.
Given this classifier and the pre-trained generator $p_\theta(\rvec \mid p)$, classifier guidance \citep{nichol2021diffusionmodelsbeatgans, Yang2021fudge} defines
the guided distribution via Bayes' rule:
\begin{equation}
    \label{eq:tempered-guided-distribution}
    p_{\theta}^{\lambda}(\rvec \mid p, \phi_\alpha = 1)
    \propto \big[p_{\alpha}(\phi_\alpha = 1 \mid \rvec, p)\big]^{\lambda} \cdot p_\theta(\rvec \mid p),
\end{equation}
where $\lambda \in \mathbb{R}^+$ is a guidance scale controlling the strength of the property signal. \looseness=-1

For autoregressive generators, the guided distribution factorizes into a token-level form. Let $r_k \in \mathcal{V}$ denote the $k$-th token in the reactant sequence $\rvec$, with $\mathcal{V}$ the vocabulary:
\begin{equation}
    \label{eq:token-guidance}
    p_{\theta}^{\lambda}(r_k \mid \rvec_{<k}, p, \phi_\alpha = 1)
    \propto \big[p_{\alpha}(\phi_\alpha = 1 \mid \rvec_{<k}, r_k, p)\big]^{\lambda} \cdot p_\theta(r_k \mid \rvec_{<k}, p).
\end{equation}
At each decoding step, the classifier scores every candidate next token $r_k \in \mathcal{V}$ appended to the current prefix $\rvec_{<k}$, and these scores are combined with the generator's logits before beam expansion (\Cref{fig:token-reranking}). The classifier $p_{\alpha}(\phi_\alpha = 1 \mid \rvec_{<k}, r_k, p)$ must therefore estimate the property of the \emph{complete} reaction from a partial sequence $\rvec_{\leq k} = (\rvec_{<k}, r_k)$. \looseness=-1

\section{Methods}
\label{sec:methods}

Classifier guidance reranks tokens by adding classifier log-probabilities to generator logits (\cref{eq:token-guidance}).
Let $\mathcal{R}_\phi = \{\rvec \in \mathcal{V}^L \mid \phi_\alpha(\rvec, p) = 1\}$ denote the set of all length-$L$
property-satisfying sequences, and $\mathrm{Beam}_\omega$ the set of sequences returned by unguided beam search of width $\omega$.
Fix any $\rvec^* \in \mathcal{R}_\phi \setminus \mathrm{Beam}_\omega$, a property-satisfying sequence absent from the unguided beam,
and let $\hat r_k = \arg\max_{r_k} p_\theta(r_k \mid \rvec^*_{<k}, p)$ be the generator's top-scoring token at step~$k$ along $\rvec^*$.
Recovering $\rvec^*$ under beam search requires that, at every step, the guided score of the target token $r^*_k$ exceed that of $\hat r_k$.
Substituting the guided distribution (\cref{eq:token-guidance}) gives the per-step necessary condition:
\begin{equation}
\underbrace{\log \frac{p_\alpha(\phi_\alpha \mid \rvec^*_{<k}, r^*_k, p)}{p_\alpha(\phi_\alpha \mid \rvec^*_{<k}, \hat r_k, p)}}_{\Delta_k}
\;>\; \frac{1}{\lambda}
\underbrace{\log \frac{p_\theta(\hat r_k \mid \rvec^*_{<k}, p)}{p_\theta(r^*_k \mid \rvec^*_{<k}, p)}}_{G_k}
\label{eq:gap-condition}
\end{equation}
We call the left-hand side the \emph{token-level discriminability} $\Delta_k$:
the classifier's log-probability gap between the target continuation $r^*_k$
and the generator's preferred token $\hat r_k$. The right-hand side $G_k$ is
the \emph{guidance requirement}: the minimum discriminability needed to
overcome the generator's preference at step $k$. \looseness=-1
 
The standard recipe for training such a token-level classifier is
cross-entropy on partial sequences, optionally augmented with samples from
the generator to improve coverage at decoding-time prefixes
\citep{Yang2021fudge,Meng2022Nado}. We show that cross-entropy estimation
cannot produce a $\Delta_k$ satisfying Eq.~\eqref{eq:gap-condition} on
retrosynthesis data: doing so requires a per-prefix sample regime that
reaction corpora structurally do not realize.
Sec.~\ref{sec:ce-shortcomings} makes this concrete through a
sample-complexity analysis on a toy model and translates the data
requirement back into reaction terms; \emph{Sequence Completion Ranking} (SCR; Sec.~\ref{sec:calibration})
circumvents the estimation entirely by installing the discriminability
gap by construction. \looseness=-1
 
\subsection{Why cross-entropy training fails on reaction data}
\label{sec:ce-shortcomings}
\begin{figure}[tbh]
    \centering
    \begin{subfigure}[t]{0.42\linewidth}\centering
        \vspace{0pt}
        \includegraphics[width=\linewidth]{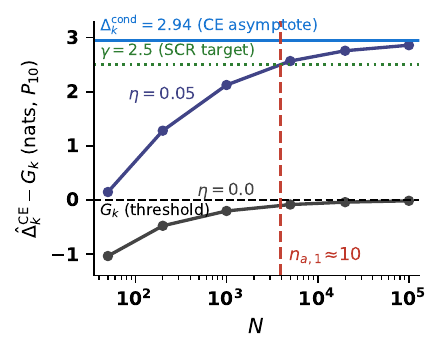}
        \subcaption{Empirical CE signal vs.\ per-prefix samples.}\label{fig:ce-failure-modes-curves}
    \end{subfigure}\hfill
    \begin{subfigure}[t]{0.55\linewidth}\centering
        \vspace{0pt}
        \includegraphics[width=\linewidth]{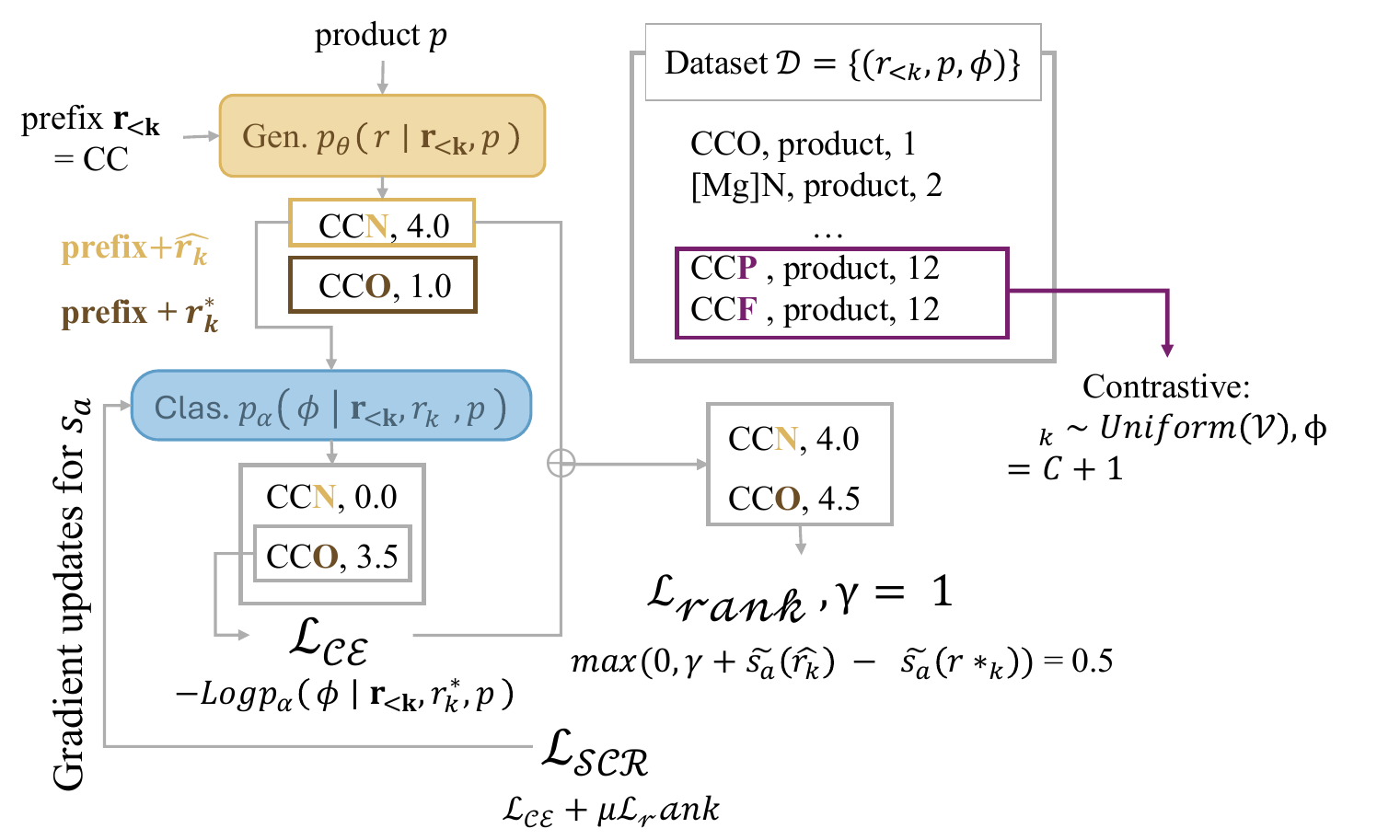}
        \subcaption{SCR training pipeline.}\label{fig:ce-failure-modes-scr}
    \end{subfigure}
    \caption{\textbf{CE training fails to deliver token-level discriminability on retrosynthesis data; SCR installs the gap by construction.}
    \textbf{(a)}~Excess CE signal $\hat\Delta_k^{\mathrm{CE}}-G_k$ vs.\ per-prefix samples $N$ (at $\epsilon{=}0.05$); 
    curves are the 10th percentile of $\hat\Delta_k^{\mathrm{CE}}$ across simulated training draws, the value reached in 90\% of runs, 
    matching $\delta{=}0.1$ in the bound. The $\eta{=}0$ (USPTO) curve stays at $G_k$ for any $N$; at $\eta{=}0.05$ the signal clears 
    $\gamma$ only at $\mathbb{E}[n_{a,1}]\approx 10$ rare-cell examples per prefix (red dashed), unattainable per-prefix on retrosynthesis data.
    \textbf{(b)}~The SCR training pipeline; ranking loss decouples discriminability from per-prefix sample count.}
    \label{fig:ce-failure-modes}
\end{figure}

\paragraph{What guidance demands, in a clean toy model.}
Eq.~\eqref{eq:gap-condition} requires the classifier to deliver $\Delta_k > G_k/\lambda$
at every step where target and generator disagree. To see what this asks of the
data, consider a simple toy example: a single prefix $X$,
a binary vocabulary $\{a,b\}$, and a binary class label $\phi \in \{0,1\}$,
with class~$1$ the steering target.
Class~$0$ prefers $a$, class~$1$ prefers $b$. Let
$\epsilon = P(t{=}a \mid \phi{=}1, X) = P(t{=}b \mid \phi{=}0, X)$ be the
within-class probability of the dispreferred token, and
$\eta = P(\phi{=}1 \mid X)$ the prevalence of the target class at $X$.
When the target class is the minority at $X$ ($\eta < 1/2$), the generator
(trained on the same distribution) prefers $a$, so the classifier must
overturn it: $r_k^* = b$, $\hat{r}_k = a$. Despite its simplicity, this
toy captures the structural feature that drives the failure on reaction
data: a class--token combination that is rare relative to the generator's prior. \looseness=-1

\paragraph{The CE estimator concentrates on a single rare cell.}
A CE-trained classifier targets the class posterior
$q_t := P(\phi{=}1 \mid X, t)$, so its discriminability between the two
candidate tokens is $\Delta^{\mathrm{CE}}_k = \log q_b - \log q_a$. With $N$ i.i.d.\ samples at $X$, the $2{\times}2$ token-by-class counts $n_{t,\phi}$
(number of samples in cell $(t,\phi)$) give the plug-in estimate
$\hat{\Delta}^{\mathrm{CE}}_k = \log \hat{q}_b - \log \hat{q}_a$ (the empirical
$\Delta_k$ of \cref{eq:gap-condition}), with
$\hat{q}_t = n_{t,1}/(n_{t,0}+n_{t,1})$.
$\hat{\Delta}^{\mathrm{CE}}_k$ is a noisy estimate of $\Delta^{\mathrm{CE}}_k$,
so \cref{eq:gap-condition} holds at a given prefix only when
$\hat{\Delta}^{\mathrm{CE}}_k$ exceeds $G_k$ with high probability, as 
set by its sampling variance. A delta-method calculation
(Appendix~\ref{app:sample-complexity}) shows that, when $q_a \ll 1$,
\[
    \mathrm{Var}(\hat{\Delta}^{\mathrm{CE}}_k)
    \;\approx\; \frac{1}{\mathbb{E}[n_{a,1}]},
    \qquad \mathbb{E}[n_{a,1}] \,=\, N\eta\epsilon.
\]
The variance is governed by $\mathbb{E}[n_{a,1}]$: the expected count of training
examples that belong to the steering target class but follow the
generator-preferred (and, from the target's perspective, ``wrong'') token. We
call this the \emph{rare cell}. It shrinks whenever the steering target is
uncommon ($\eta$ small) or class--token preferences are sharp ($\epsilon$
small), precisely the regimes where guidance has the most to contribute. \looseness=-1

\paragraph{From variance to a per-prefix sample budget.}
For a tolerated failure probability $\delta$, the per-prefix success condition
reads $\Pr[\hat{\Delta}^{\mathrm{CE}}_k > G_k] \ge 1{-}\delta$. A normal
approximation to $\hat{\Delta}^{\mathrm{CE}}_k$, using the variance above,
turns this into
\[
    \mathbb{E}[n_{a,1}] \;\ge\; z_{1-\delta}^2 \big/ \bigl(\Delta^{\mathrm{cond}}_k\bigr)^2,
\]
where $\Delta^{\mathrm{cond}}_k := \Delta^{\mathrm{CE}}_k - G_k$ is the gap
between the CE signal and $G_k$ (derivation in
Appendix~\ref{app:sample-complexity}). At a target gap of
$\Delta^{\mathrm{cond}}_k = 1$, the bound gives $\mathbb{E}[n_{a,1}] \gtrsim 10$
rare-cell examples \emph{per prefix} (\cref{fig:ce-failure-modes-curves},
red dashed line); below that, sampling fluctuations regularly push
$\hat{\Delta}^{\mathrm{CE}}_k$ below $G_k$. \looseness=-1

\paragraph{Why this is unattainable on retrosynthesis data.}
The combination ``common prefix, target class, generator-dispreferred token''
(i.e. the the rare cell), is sparse in the training data, even when augmented with
on-policy samples from $p_\theta$ relabeled via RXN-Insight \citep{rxn_insight}
(the source of nonzero unguided steering breadth). So $N\eta\epsilon$ stays well
below the threshold above, and the CE estimate collapses toward $G_k$ regardless
of the total number of samples $N$ (\cref{fig:ce-failure-modes-curves}).
SCR's contrastive augmentation (\cref{sec:calibration}) supplies synthetic rare-cell samples at every prefix the decoder visits, and the ranking loss installs the discriminability gap by construction. \looseness=-1

\subsection{Calibrating for discriminability}
\label{sec:calibration}
 
\Cref{sec:ce-shortcomings} establishes that CE training cannot
estimate $\Delta_k^{CE}$ at the resolution guidance requires.
SCR addresses this through two complementary modifications, each
targeting one of the failure modes identified above. \looseness=-1
 
\textbf{Contrastive augmentation: populating the rare cell.}
For each partial sequence $\rvec_{\le k}$, we construct a same-length negative
example by replacing only its final token $r_k$ with a uniformly random
vocabulary token $\tilde r_k$ (leaving the prefix $\rvec_{<k}$ untouched),
and assign it a catch-all label $y = C+1$ distinct from all $C$ reaction
classes. Uniform sampling is what keeps the label safe: a generator-plausible
$\tilde r_k = \hat r_k$ is often valid under some off-target class
$y' \neq y$, so a $C+1$ label there would suppress $y'$ at inference. This
populates the rare cell at off-policy positions, providing the cross-class
boundary signal that single-class-per-product annotation denies the
classifier. \looseness=-1

\textbf{Ranking loss: enforcing the margin directly.}
Off-policy augmentation alone leaves two gaps: the resulting CE objective
fits a posterior $p_{\alpha}(\phi \mid \rvec_{\le k}, p)$ whose log-odds at
$(r^*_k, \hat r_k)$ are not constrained to exceed any particular threshold,
and the on-policy position $\hat r_k$ itself remains unreachable through CE
without the label leakage above. We add a margin-based ranking loss that
closes both: enforcing only \emph{relative} ordering commits to no class
label, so it can safely use the generator's top alternative
$\hat r_k = \arg\max_{r \neq r^*_k} p_\theta(r \mid \rvec_{<k}, p)$ as the
negative anchor. For each training sequence we require the
\emph{guided log-score}
\begin{equation}
\tilde s_\alpha(r) \;:=\; \log p_\theta(r \mid \rvec_{<k}, p)
\;+\; \log p_\alpha(\phi \mid \rvec_{<k}, r, p)
\label{eq:guided-score}
\end{equation}
at the ground-truth next token $r^*_k$ to exceed that of $\hat r_k$
by at least margin $\gamma$:
\begin{equation}
\tilde s_\alpha(r^*_k) - \tilde s_\alpha(\hat r_k) \;\geq\; \gamma.
\label{eq:ranking}
\end{equation}
The guided score mirrors the inference-time decoder, so the rank loss
calibrates exactly the quantity that beam search consumes; in the
notation of \cref{sec:ce-shortcomings} it bounds the useful gap
$\Delta_k - G_k$ by $\gamma$.
Augmentation supplies broad off-policy coverage; the
ranking loss installs a calibrated margin at the on-policy positions
guidance must clear, making the two complementary. \looseness=-1

\textbf{Combined objective.}
The classifier $p_\alpha$ is trained on the union of ground-truth partial sequences, contrastive augmentations, and on-policy samples from $p_\theta$ (jointly denoted $\mathcal{D}$), minimizing
\begin{equation}
\mathcal{L} \;=\; \mathbb{E}_{\mathcal{D}}\!\bigl[\mathcal{L}_{\mathrm{CE}}(p_\alpha, y)\bigr]
\;+\; \mu\, \mathbb{E}_{\mathcal{D}}\!\bigl[\max\bigl(0,\, \gamma + \tilde s_\alpha(\hat r_k) - \tilde s_\alpha(r^*_k)\bigr)\bigr],
\label{eq:scr-loss}
\end{equation}
where $\gamma > 0$ and $\mu > 0$ are hyperparameters. The training
procedure is summarized in Algorithm~\ref{alg:app-classifier-training}. \looseness=-1

\subsection{From token discriminability to property reachability}
\label{sec:reachability}

The preceding analysis establishes \emph{what} the classifier must achieve at each decoding step.
We now show that a classifier satisfying these per-step requirements provably expands the set of property-satisfying sequences reachable under guided decoding, recovering targets that unguided decoding would discard.
The ranking margin $\gamma$ enforced during training translates, after softmax, into classifier probability bounds $c_1 > c_2$ (\cref{app:chemical-space-exploration}); these bounds drive the reachability guarantee below. \looseness=-1

\begin{proposition}{\rm \bf (Reachability under Guided Beam Search)}
    \label{proposition:reachability}
    Let $\beta_n^{(l)}$ be the score of the $n$-th beam (i.e. the beam with
    the lowest accepted score) at generation step $l$, and assume
    $\mathcal{R}_\phi \setminus \text{Beam}_\omega \neq \emptyset$.
    Let $\rvec^* \in \mathcal{R}_\phi \setminus \text{Beam}_\omega$ be any
    property-satisfying sequence absent from the unguided beam.
    Assume the classifier $p_{\alpha}$ satisfies for all $l \in [1, L]$:
    \begin{align}
    \label{eq:classifier-bounds-main}
        p_{\alpha}(\phi_\alpha &= 1 \mid \rvec^*_{\leq l}, p) \geq c_1 \\
        p_{\alpha}(\phi_\alpha &= 1 \mid \rvec_{\leq l}, p) \leq c_2,
        \quad \text{when } \phi_{\alpha}(\rvec, p) = 0 \text{ and } \rvec_{\leq l} \neq \rvec^*_{\leq l}
    \end{align}
    where $0 < c_2 < c_1 \leq 1$. The upper bound is required only at
    prefixes that diverge from $\rvec^*$; at shared prefixes the
    classifier output is unconstrained, since both sequences receive the
    same per-step contribution and these cancel in any beam comparison.
    Then:
    \begin{enumerate}
        \item Unguided beam search ($\lambda = 0$) excludes $\rvec^*$ from
        the final output,
        \item There exists $\lambda^* > 0$ such that guided beam search
        with guidance scale $\lambda \geq \lambda^*$ includes at least one
        property-satisfying sequence in the final output.
    \end{enumerate}
\end{proposition}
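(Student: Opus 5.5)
Part 1 follows from the definitions. Taking $\lambda = 0$ in \cref{eq:tempered-guided-distribution} makes the classifier factor $[p_\alpha]^0 \equiv 1$, so guided decoding at every step coincides with unguided decoding of $p_\theta$ (\cref{eq:token-guidance}). The guided output is therefore exactly $\mathrm{Beam}_\omega$. Since $\rvec^* \notin \mathrm{Beam}_\omega$ by hypothesis, $\rvec^*$ is excluded.

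For Part 2 I will work with cumulative guided log-scores rather than the per-step condition \cref{eq:gap-condition}. This is because beam search compares accumulated scores of partial hypotheses. For a prefix $\rvec_{\le l}$, define
\[
S_\lambda(\rvec_{\le l}) = \sum_{k\le l}\bigl[\log p_\theta(r_k\mid\rvec_{<k},p) + \lambda \log p_\alpha(\phi_\alpha=1\mid \rvec_{\le k},p)\bigr].
\]
The strategy is a dichotomy at each step $l$ where $\rvec^*_{\le l}$ competes for a slot.

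\textbf{Case 1.} Some property-satisfying hypothesis is already in the beam and survives. Then at least one property-satisfying sequence is output, and we are done.

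\textbf{Case 2.} Otherwise I show that, for $\lambda$ large, $\rvec^*_{\le l}$ outscores every competitor $\rvec_{\le l}$ whose completion fails the property. Let $j$ be the first index where $\rvec_{\le l}$ diverges from $\rvec^*_{\le l}$. The terms with $k<j$ are identical and cancel, as the statement remarks. For $k \ge j$ the bounds \cref{eq:classifier-bounds-main} give a classifier contribution of at least $\lambda (l-j+1)\log(c_1/c_2) \ge \lambda\log(c_1/c_2) > 0$. The generator contribution is bounded below by
\[
\sum_{k=j}^{l}\log p_\theta(r^*_k\mid\rvec^*_{<k},p) - 0 \ge -B,
\]
where $B := -\sum_{k=1}^{L}\log p_\theta(r^*_k\mid \rvec^*_{<k},p) < \infty$. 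This uses $\log p_\theta \le 0$ for the competitor and assumes $p_\theta(\rvec^*\mid p) > 0$, which holds for a softmax generator. Setting
\[
\lambda^* = B/\log(c_1/c_2)
\]
(or slightly larger, for strict inequality) makes $S_\lambda(\rvec^*_{\le l}) > S_\lambda(\rvec_{\le l})$ for every such competitor, uniformly in $l$ and $j$.

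Combining the cases by induction on $l$: at every step, either a property-satisfying hypothesis occupies a beam slot, or $\rvec^*_{\le l}$ beats every non-satisfying competitor. In the latter case its score exceeds $\beta_n^{(l)}$ whenever the beam contains any non-satisfying hypothesis, so it is retained. In the former case we inherit a satisfying hypothesis. At $l = L$ the output contains either $\rvec^*$ or another element of $\mathcal{R}_\phi$.

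\textbf{Main obstacle.} The hardest step is the bookkeeping in Case 1. The upper bound $c_2$ applies only to prefixes whose completion has $\phi_\alpha = 0$, but partial hypotheses do not have a determined completion. I would resolve this by interpreting "$\phi_\alpha(\rvec,p)=0$" in the hypothesis as holding for the eventual beam completion of that prefix. I would then argue on final sequences, comparing completed hypotheses at $l=L$ together with the retention argument above.

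Two points need care. First, a satisfying hypothesis kept at step $l$ may be displaced later only by another satisfying hypothesis, since non-satisfying ones score lower by the same bound. This keeps the invariant "the beam contains at least one $\mathcal{R}_\phi$-prefix" once it holds. Second, the invariant must hold from the start, at $l=1$. This is why the conclusion is "at least one property-satisfying sequence" rather than $\rvec^*$ itself.
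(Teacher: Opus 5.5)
Your Part 1 and the core of Part 2 coincide with the paper's proof. Both use cumulative guided scores, cancellation on the shared prefix, a classifier gain of $\lambda(l-j+1)\log(c_1/c_2)$ over the divergent positions $D(\rvec,l)=\{j,\dots,l\}$, and induction on $\rvec^*_{\le l}\in B^g_l$ with an escape clause. Your threshold $\lambda^*=-\log p_\theta(\rvec^*\mid p)/\log(c_1/c_2)$, nudged up for strictness, is a legitimate explicit alternative to the paper's $\max_l\max_{\rvec}[F(\rvec_{\le l})-F(\rvec^*_{\le l})]/(|D(\rvec,l)|\log(c_1/c_2))$. It discards the $|D|$ factor, but it needs no maximization over competitors and avoids tie-breaking issues.

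The gap is in how you close the escape case. You claim a satisfying hypothesis ``may be displaced later only by another satisfying hypothesis, since non-satisfying ones score lower by the same bound.'' This does not follow. Your bound compares non-satisfying prefixes only against $\rvec^*_{\le l}$, and both its ingredients (the $c_1$ floor and the constant $B$) are specific to $\rvec^*$. A satisfying hypothesis $\rvec'\neq\rvec^*$ has no classifier floor and no generator lower bound, so a non-satisfying competitor may outscore it. Hence ``at least one $\mathcal{R}_\phi$-prefix in the beam'' is not an invariant you can propagate this way. Your reinterpretation via ``eventual beam completions'' is also unnecessary, and it makes the hypothesis depend on $\lambda$.

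The missing observation is that the hypothesis quantifies over every full $\rvec$ with $\phi_\alpha(\rvec,p)=0$. So any divergent prefix with at least one non-satisfying completion inherits the $c_2$ bound at all of its divergent positions. The only exempt prefixes are those all of whose length-$L$ completions lie in $\mathcal{R}_\phi$. Your Case~2 then shows that $\rvec^*_{\le l}$ strictly beats every non-exempt candidate. Hence at the first step where it is excluded, \emph{all} $\omega$ slots of $B^g_l$ are exempt prefixes, not just one. Beam search only extends members of $B^g_l$, and extensions of exempt prefixes are exempt, so $B^g_L\subseteq\mathcal{R}_\phi$.

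This ``all slots, then descent'' step is also what makes the paper's ``otherwise'' branch airtight, since a single non-exempt beam member already yields its contradiction. It is also the real reason the conclusion is weakened to ``some property-satisfying sequence'': exempt competitors are unconstrained by the hypothesis and may legitimately displace $\rvec^*$. The reason is not that an invariant must hold at $l=1$.
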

The classifier bounds $c_1$ and $c_2$ translate into a concrete guidance
scale $\lambda^*$ above which new property-satisfying sequences become
reachable. Larger $\gamma$ widens the gap $c_1 - c_2$, which in turn
lowers the required $\lambda^*$. Proofs and further discussion are given in
\cref{app:theoretical-analysis,app:chemical-space-exploration}. \looseness=-1

Since $\lambda^*$ varies per target, at inference we select $\lambda$ adaptively via a lookahead procedure (\cref{app:lookahead-inference}).

\section{Experiments}
\label{sec:experiments}
We evaluate SCR along three axes:
(1) the contribution of each training component to classifier quality,
(2) whether token-level guidance explores different chemical space in single-step retrosynthesis, and
(3) whether single-step guidance improves multi-step route finding.
We use reaction type and starting material similarity as guidance targets
(\cref{app:properties-additional}). \looseness=-1

\paragraph{Experimental setup.}
\label{sec:experimental-setup}
We evaluate single-step performance on USPTO-190-Steps (640 deduplicated test products of USPTO-190~\citep{chen2020retrostar})---an upper-bound proxy for multi-step performance, since precursors not recovered here also fail downstream---and multi-step search on the full 190 USPTO-190 targets; USPTO-50k results are in \cref{app:additional-results}.
The base generator (\cref{sec:single-step-retro}) runs with beam size~10; following the RSMILES protocol \citep{Zhong2022Rsmiles} we augment each product with 20 canonical-SMILES randomizations and run beam search per augmentation, pooling and deduplicating the candidates to obtain 100 samples per product (also the syntheseus default for template-free models). All other methods generate 100 samples per product to match this budget.
We use the SCR-trained reaction-type classifier and a Tanimoto-similarity predictor that swaps the classification head for an MSE regression head with the same training procedure (\cref{app:tanimoto-data}).
Baselines are all methods in Syntheseus~\citep{maziarz2023syntheseus} plus NeuralSym~\citep{segler2017neuralsym}; template-based methods are marked $^\mathcal{T}$.
For multi-step search, the guidance target at each node is set to the corresponding ground-truth value (reaction class or starting material) from the reference route, simulating a chemist who specifies the desired disconnection; we report a strict apples-to-apples comparison applying the same target to baselines via post-hoc filtering in \cref{tab:search-filtered-baselines}.
Generated precursors are validated with RDKit, named by RXN-Insight~\citep{rxn_insight}, and checked for round-trip accuracy via the forward model of \citet{Zhong2022Rsmiles}.
We report \emph{steering breadth} (fraction of non-ground-truth reaction types where at least one sample is class-correct, per product), with stricter variants requiring RXN-Insight reaction naming (+RXN-I) or round-trip validity (+RT), top-$k$ accuracy, and Jaccard overlap between guided and unguided samples.
Full setup details are in \cref{app:datasets-and-processing-additional,app:experimental-setup,app:evaluation}. \looseness=-1

\paragraph{Classifier training ablation}
\label{sec:training-property-predictors-experiments}
To understand the contribution of each SCR component (\cref{sec:calibration}),
we train reaction type classifiers with different combinations of training data
(ground-truth vs.\ on-policy generator samples), contrastive augmentation,
and ranking loss, evaluating each on a held-out set of 1M partial sequences (\cref{app:experimental-setup}).
\Cref{fig:src-ablation} shows that training on generator samples with the ranking loss
(red) achieves the highest accuracy. Removing the ranking loss (orange) reduces accuracy,
confirming its role in producing discriminative logits. Training on ground-truth sequences alone (purple, gold) yields lower
accuracy due to reduced diversity; adding the ranking loss to ground-truth training (gold) causes overfitting on
early tokens, as the generator already ranks these correctly, leaving minimal ranking signal (panel~c).
Omitting wrong-token augmentation (dashed gray) degrades performance throughout.
Subsequent experiments use the SCR classifier corresponding to the red line, with $L_{\min}{=}5$, $\lambda{=}1$ for multi-step search, and lookahead over $\lambda \in \{0, 0.5, 1\}$ for single-step (\cref{app:additional-results-single-step}). \looseness=-1
\begin{figure*}[!t]
    \centering
    \includegraphics[width=\textwidth]{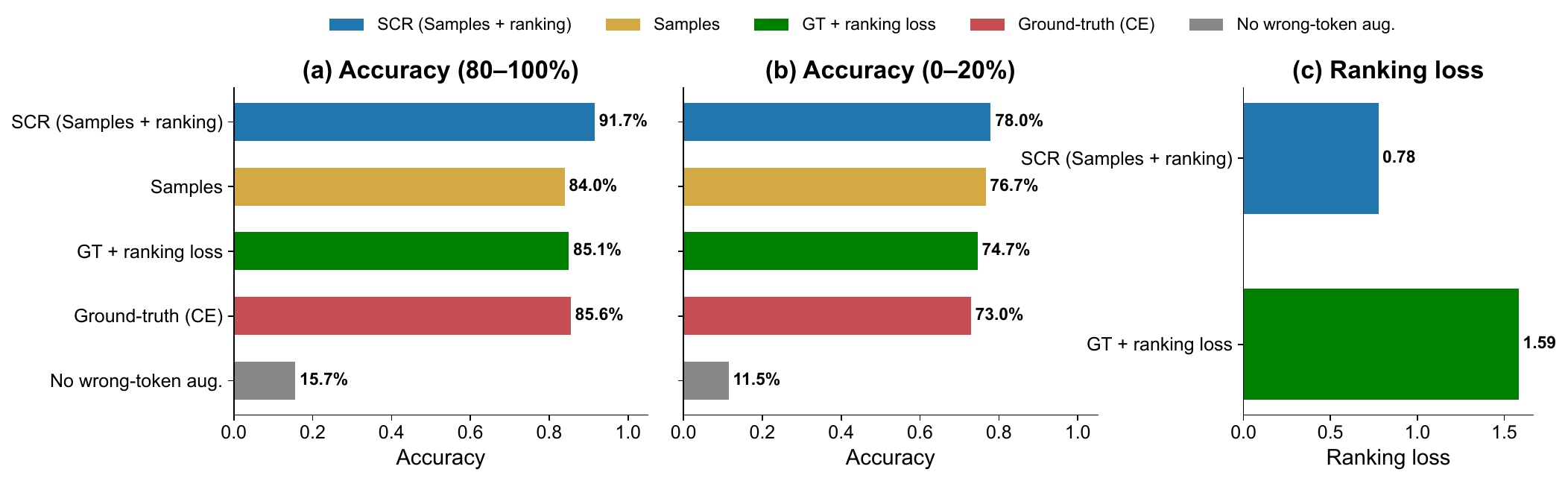}
    \caption{Ablations of classifier training methods on reaction type prediction.
    Accuracy is evaluated on partial sequences from generator samples augmented with wrong-token completions,
    with (a) showing later tokens (80\% complete) and (b) earlier tokens (0--20\% complete).
    Training on generator samples with ranking loss (red) achieves highest accuracy.
    Training on ground-truth sequences with ranking loss (gold) overfits on early tokens, as the generator
    already ranks these correctly---yielding minimal ranking signal (c). Dashed gray: no wrong-token augmentation.}
    \label{fig:src-ablation}
\end{figure*}

\paragraph{Single-step retrosynthesis}
\label{sec:single-step-evaluation}
We guide each product in USPTO-190-Steps toward every available reaction type and
assess whether guidance (1)~explores different chemical space, (2)~increases property
satisfaction while preserving validity, and (3)~improves ground-truth recovery. When steering toward non-ground-truth reaction types, SCR$_{\text{rxn}}$ outperforms all baselines,
including template-based methods.
The per-class breakdown (\cref{fig:delta-heatmaps}) makes this concrete: when guiding products normally produced by Heteroatom Alkylation toward C--C Coupling, the guided model solves 27 more products than the unguided generator and 33 more than MEGAN.
Overall, SCR$_{\text{rxn}}$ achieves a steering breadth of 0.64, surpassing the best
template-based method (RetroKNN at 0.51) and closing a long-standing gap between template-free and
template-based diversity.
When additionally requiring round-trip validity or RXN-Insight reaction naming,
SCR$_{\text{rxn}}$ matches the best template-based methods
(0.25 vs.\ 0.24 for +RT; 0.39 vs.\ 0.41 for +RXN-I),
confirming that steered samples are not only class-correct but also chemically feasible
(\cref{fig:steering-breadth}).
This is also reflected in top-$k$ accuracy improvements (\cref{app:additional-results}).
Crucially, guided and unguided samples exhibit only 39\%
Jaccard overlap (\cref{fig:jaccard}), confirming genuine exploration of new chemical space
rather than simple reranking.
Among 287 products where the unguided generator fails entirely, guidance recovers 113 via exact match and 231 via round-trip accuracy (\cref{fig:failure}). Further analysis is in \cref{app:additional-results}. \looseness=-1
\begin{figure*}[htbp]
    \centering
    \resizebox{0.60\textheight}{!}{
    \includegraphics[width=\textwidth]{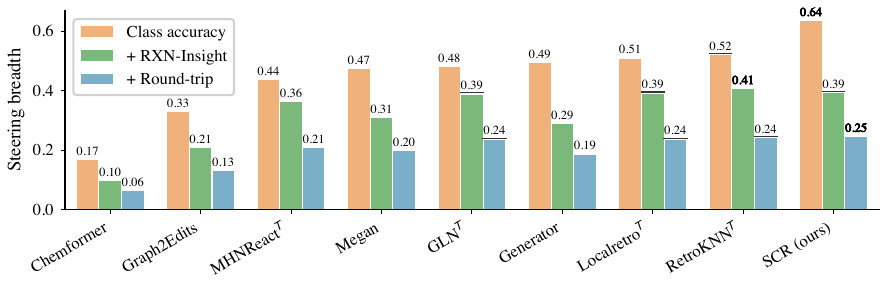}
}
\caption{Steering breadth for all baselines. SCR$_{\text{rxn}}$ achieves the highest steering breadth
on class accuracy and matches template-based methods when additionally filtering for round-trip validity (+RT) or RXN-Insight reaction naming (+RXN-I).}
\label{fig:steering-breadth}
\end{figure*}
\paragraph{Multi-step synthesis}
We evaluate whether single-step guidance improves multi-step route finding
using Retro* search~\citep{chen2020retrostar} on the 190 target molecules of USPTO-190
with a budget of 100 model calls per target (\cref{tab:search-based-synthesis-reaction-type-quality}).
When token-level and search-level guidance are combined (SG$_{\text{rxn}}$, \cref{app:additional-results}), the Retro* value function is additionally biased toward precursors whose predicted property matches the target, with a multiplicative factor balancing synthesizability against the steering signal.
With a USPTO-50k checkpoint, SCR$_{\text{rxn}}$ delivers a $4.7{\times}$ improvement in solve rate over the unguided generator and surpasses the best template-free baseline (Megan).
SCR$_{\text{tan}}$ achieves the highest solve rate overall and the highest route diversity among all methods.
Tanimoto guidance steers toward similar starting materials without enforcing exact match, so the Rate~w/~SM column trails the raw solve rate---a conceptual feature of similarity-based steering rather than a failure of the method.
Combining token-level and search-level steering (SG$_{\text{rxn}}$) yields a comparable solve rate to S$_{\text{rxn}}$ alone with a marginal gain in route diversity (\cref{tab:search-metrics-combined}); we discuss this trade-off in \cref{app:additional-results}.
On Pistachio Reachable and Pistachio Hard, Tanimoto guidance reaches the top solve rates (\cref{tab:search-pistachio-reachable,tab:search-pistachio-hard}). \looseness=-1
\begin{table}[htb!]
\caption{Search-based synthesis planning on USPTO-190 with Retro*~\citep{chen2020retrostar}, 100 model calls per target, 50k and 190 single-step checkpoints.
\textit{Rate}: fraction of targets with at least one valid route. \textit{Rate w/ SM}: fraction whose route contains the conditional starting material (Tanimoto guidance only). \textit{N. Routes}: average non-overlapping routes per solved target. \textit{R.Trip}/\textit{Class}/\textit{Name}: round-trip accuracy, ground-truth class match, and RXN-Insight name match averaged over routes. \textit{Nodes}/\textit{Calls}: average search nodes and single-step model calls per target ($\downarrow$ better). Bold/underlined: best/second-best per checkpoint. Green rows are SCR.}
\label{tab:search-based-synthesis-reaction-type-quality}
\centering
\small
\setlength{\tabcolsep}{2pt}
\begin{tabularx}{\columnwidth}{@{}Xccccccccc@{}}
    \toprule
    Method & Rate & Rate w/ SM & N. Routes & R.Trip & Class & Name & Nodes & Calls \\
     & ($\uparrow$) & ($\uparrow$) & ($\uparrow$) & ($\uparrow$) & ($\uparrow$) & ($\uparrow$) & ($\downarrow$) & ($\downarrow$) \\
    \midrule
    \multicolumn{9}{l}{\textit{USPTO-50k}} \\[0pt]
    \quad Retroknn$^{\mathcal{T}}$ & 0.54 & \underline{0.25} & 1.58 & 0.23 & 0.14 & 0.37 & 8013.27 & 37.64 \\
    \quad Megan & 0.66 & 0.21 & 1.37 & \textbf{0.27} & 0.13 & \textbf{0.43} & 8505.98 & 34.77 \\
    \quad Generator & 0.17 & 0.01 & 0.26 & 0.06 & 0.02 & 0.08 & 2410.82 & 33.76 \\
    \rowcolor{highlightgreen}
    \quad SCR$_{\text{rxn}}$ & 0.78 & 0.15 & 2.41 & 0.18 & 0.22 & 0.31 & \underline{1023.66} & 7.00 \\
    \rowcolor{highlightgreen}
    \quad SCR$_{\text{tan}}$ & \textbf{0.95} & 0.18 & 3.33 & 0.23 & \textbf{0.24} & \textbf{0.43} & \textbf{876.99} & \underline{6.64} \\[0pt]
    \multicolumn{9}{l}{\textit{USPTO-190}} \\[0pt]
    \quad Generator & 0.81 & 0.17 & 1.93 & \underline{0.26} & \underline{0.23} & \underline{0.42} & 1071.63 & 9.14 \\
    \rowcolor{highlightgreen}
    \quad SCR$_{\text{rxn}}$ & \underline{0.91} & 0.17 & \underline{4.40} & 0.05 & 0.19 & 0.05 & 1436.35 & \textbf{5.58} \\
    \rowcolor{highlightgreen}
    \quad SCR$_{\text{tan}}$ & \underline{0.91} & \textbf{0.35} & \textbf{8.22} & 0.10 & 0.20 & 0.14 & 3613.41 & 9.82 \\
    \bottomrule
\end{tabularx}
\end{table}

\paragraph{Ablations}
\label{sec:ablations}
We ablate two key hyperparameters of guided decoding, keeping the others fixed
at their defaults ($\lambda{=}1.0$, $N{=}72$, $L_{\min}{=}5$):
the guidance scale $\lambda$ and the guidance onset length $L_{\min}$
(tokens generated before guidance begins).
The number of candidate tokens $N$ has virtually no effect
($< 0.2$\,pp variation across $N{=}10$--$72$), so we fix $N{=}72$ and omit this ablation.
\Cref{fig:ablation-combined} summarizes the results; full per-setting tables are in \cref{app:ablations}.
Applying guidance earlier ($L_{\min}$) monotonically improves class accuracy by ${\sim}3.7$\,pp
without degrading round-trip accuracy: the classifier extracts useful signal from the product alone, achieving above-chance accuracy even at the first reactant tokens (\cref{fig:src-ablation}(b)), so earlier onset is a net benefit.
Increasing $\lambda$ improves class accuracy by $7.7$\,pp at the cost of a $6.4$\,pp
drop in round-trip accuracy, motivating the adaptive lookahead algorithm (\cref{app:lookahead-inference}).
Additional experiments justifying the need for token-level guidance
(vs.\ wider beams or more samples) are given in \cref{app:token-guidance-needed-additional}. \looseness=-1

\paragraph{Case study.}
We illustrate token-level guidance on Wieland-Gumlich Aldehyde, a central intermediate in known Strychnine syntheses \citep{Genheden2025RouteSimilarity}, where reaction-type guidance recovers a ground-truth precursor inaccessible to the unguided generator (\cref{fig:case-study}a).
Targets~49 and~169 are unsolved by \emph{all} baselines; rxntype and Tanimoto guidance respectively discover valid 3-step routes (\cref{fig:case-study}b,c).
Across the full dataset, rxntype and Tanimoto guidance jointly unlock routes for 33 out of 190 targets (17.4\,\%) that remain inaccessible to all baselines.
These cases illustrate the practical value of our reachability guarantee
(\cref{proposition:reachability}): the classifier's token-level signal rescues
property-satisfying sequences that would otherwise fall below the beam threshold. \looseness=-1
\begin{figure*}[!t]
    \centering
    \includegraphics[width=\textwidth]{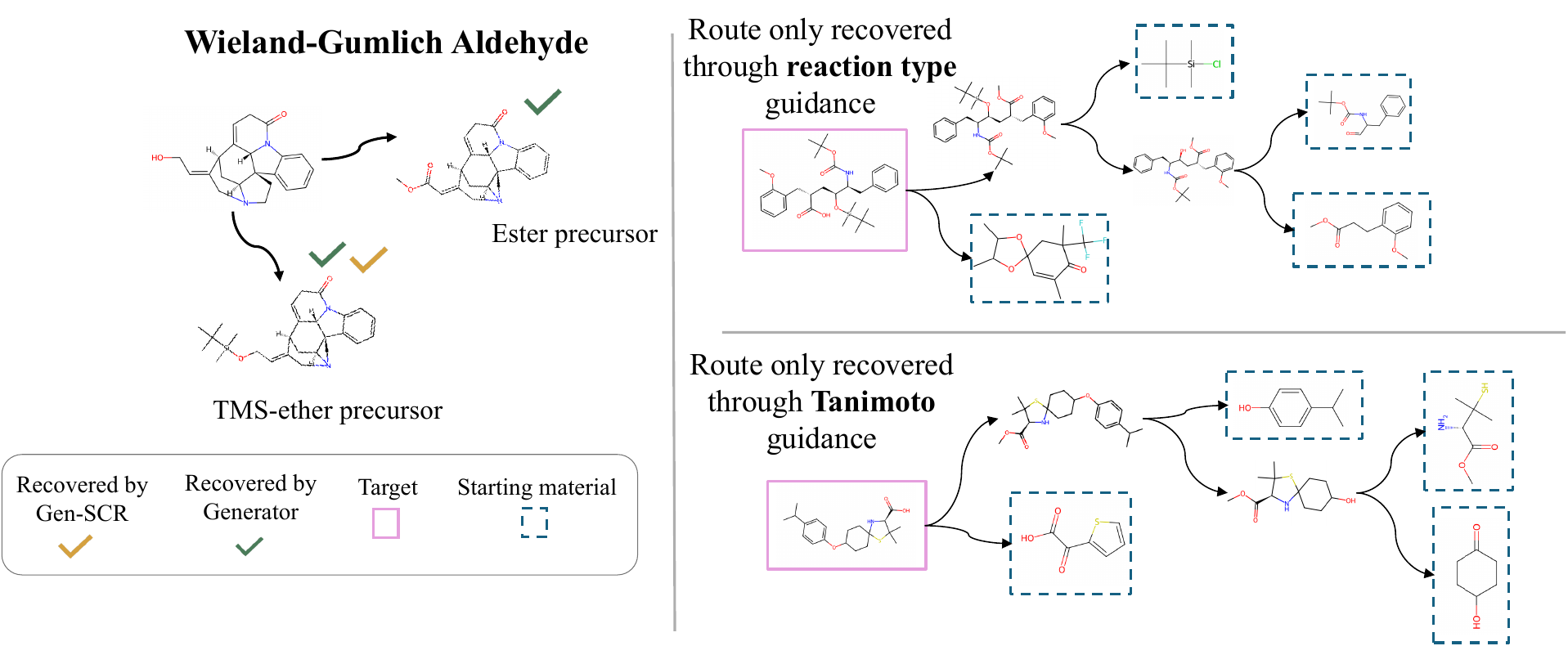}
    \caption{Case studies on USPTO-190. \textbf{(a)}~Wieland-Gumlich Aldehyde admits two retrosynthetic disconnections: reduction to an ester precursor (recovered by the unguided generator) and deprotection to a TMS-ether precursor (recovered \emph{only} by SCR with rxntype guidance). \textbf{(b)}~A route for Target~49 found exclusively by rxntype guidance; all eight baselines fail to produce any valid route for this target. \textbf{(c)}~A 3-step route for Target~169 found exclusively by Tanimoto guidance toward penicillamine as starting material; the Tanimoto similarity to the SM increases along the route ($0.15 \!\to\! 0.17 \!\to\! 0.60$). No baseline solves this target. Together, the two guidance signals unlock routes for 33 targets (17.4\,\%) inaccessible to all baselines.}
    \label{fig:case-study}
\end{figure*}

\section{Discussion and Limitations}
\label{sec:discussion-and-limitations}
SCR incorporates chemist-specified preferences (reaction class, starting material) at decode time, expanding the reachable set beyond post-hoc filtering, which can only select among candidates the unguided generator already produces. Without a target specification, SCR reduces to the unguided generator. \looseness=-1

Our evaluation depends on forward-model round-trip accuracy and template-based reaction-type classifiers, both of which struggle on novel chemistry.
SCR's ranking objective is generator-dependent--the negative anchor $\hat r_k$ is defined relative to $p_\theta$--so changing the base generator requires retraining.
Tighter coupling with search-level SM filters, multi-objective guidance, and cost-aware route optimization are natural next steps. \looseness=-1

\section{Conclusion}
We introduced SCR, a training procedure that calibrates token-level classifiers for effective guidance in autoregressive retrosynthesis. Our analysis identifies a fundamental limit of cross-entropy training and proves that SCR-trained classifiers expand the chemical space reachable by beam search. SCR steers generation toward desired reaction types and starting materials at inference time, complementing search-level planners. \looseness=-1

\bibliographystyle{plainnat}
\bibliography{references}


\newpage
\appendix
\section*{Appendices}
\startcontents[appendices]

\begingroup
\sc
\printcontents[appendices]{l}{1}{}
\endgroup

\section{Related work}

\paragraph{Steerable synthesis planning.}
Single-step retrosynthesis models span template-based methods
\citep{segler2017neuralsym,chen2021localretro,seidl2021mhnreact}
and template-free approaches
\citep{schwaller2019molecular,Zhong2022Rsmiles,sacha2021megan}.
Reranking \citep{lin2022Reranking} and diversity-aware generation \citep{gaiński2024retrogfn}
improve candidate quality but remain constrained to the base model's output space.
At the multi-step level, search algorithms such as Retro*~\citep{chen2020retrostar},
MCTS~\citep{Segler2018MCTS}, PDVN~\citep{Liu2023PDVN}, and GRASP~\citep{Yu2022GRASP}
orchestrate single-step predictions into full routes.
Constrained planning methods steer search via filtering~\citep{Segler2018MCTS},
value function biasing~\citep{Armstrong2024TangoStar},
starting-material-aware expansion~\citep{Yu2024desp,Tripp2024RetroFallback},
or reinforcement learning~\citep{Guo2024RetrosynthesisZero}.
However, all these search-level interventions operate over candidates already produced
by the underlying single-step model, fundamentally limiting exploration to the chemical
space the base generator can access. Our work intervenes directly at the token generation level,
which is complementary to search-level steering and can expand the reachable chemical space
(\cref{proposition:reachability}).

\paragraph{Classifier guidance in autoregressive models.}
Classifier guidance originated in diffusion models
\citep{dhariwal2021diffusionmodelsbeatgans}
and was extended to classifier-free variants \citep{ho2022classifierfreediffusionguidance}.
In the autoregressive setting, FUDGE \citep{Yang2021fudge} applies Bayesian token-level
reranking based on future attribute prediction;
PPLM \citep{dathathri2019plug} perturbs hidden states via attribute gradients;
GeDi \citep{krause2020gedi} uses a generative discriminator to contrast class-conditional
and unconditional distributions; and NADO \citep{Meng2022Nado} decomposes oracle constraints
into per-token guidance signals.
Several works study cooperative or pragmatic decoding
\citep{holtzman2018cooperative,cohngordon2018pragmatically}.
While these methods demonstrate effective control in natural language settings,
none analyze why standard cross-entropy-trained classifiers fail as guidance signals---even
the CE-optimal classifier achieves a discriminability driven by data statistics rather than
the generator's preferences (\cref{sec:ce-shortcomings}).
Applications to chemical synthesis planning also remain largely unexplored.
Our work addresses both gaps: we provide a theoretical characterization of this failure mode
and a training procedure (SCR) that overcomes it, with formal reachability guarantees for
guided beam search.

\section{Theoretical Analyses and Proofs}
\label{app:theoretical-analysis}

\subsection{Sample-complexity for CE-guidance success}
\label{app:sample-complexity}
 
This section provides the derivation behind Eq.~\eqref{eq:rare-cell-bound},
its verification against direct simulation, and a tighter Binomial-tail
analysis for the regime where the normal approximation strains.
 
\textbf{Setup.} A single prefix $X$, vocabulary $\{a, b\}$, classes
$\{0, 1\}$. The data is parameterized by within-class noise $\epsilon$
(class~0 prefers $a$, class~1 prefers $b$, each with probability
$1-\epsilon$) and class prior $\eta = P(\phi=1 \mid X)$. The generator,
trained on the same distribution, satisfies $p_\theta \approx P_\mathrm{data}$.
Through $N$ joint draws of $(t, \phi)$ we observe a $2 \times 2$ table
of cell counts $n_{t,c}$. The CE classifier's empirical discriminability
at the steering pair $(r^*_k, \hat r_k) = (b, a)$ is
$\hat\Delta_k^{\mathrm{CE}} = \log \hat q_b - \log \hat q_a$ with
$\hat q_t = n_{t,1}/(n_{t,0} + n_{t,1})$, where $\hat q_t$ estimates
$q_t = P(\phi{=}1 \mid X, t)$. Throughout, we work with the
\emph{class-conditional discriminability}
$\Delta_k^{\mathrm{cond}} := \Delta_k^{\mathrm{CE}} - G_k$, the part of
the CE classifier's signal that exceeds the generator's preference and
is therefore actually usable for guidance.

\textbf{Variance via the delta method.}
For a sample proportion $\hat q_t$ estimating $q_t = P(\phi{=}1 \mid X, t)$
from $n_t = n_{t,0} + n_{t,1}$ trials,
$\mathrm{Var}(\hat q_t) = q_t(1-q_t)/n_t$. Applying the delta method to
$\log \hat q_t$ with $f(x) = \log x$, $f'(x) = 1/x$:
\begin{equation}
\mathrm{Var}(\log \hat q_t)
\;\approx\; [f'(q_t)]^2 \,\mathrm{Var}(\hat q_t)
\;=\; \frac{1 - q_t}{n_t \, q_t}
\;\approx\; \frac{1}{\mathbb{E}[n_{t,1}]}\quad\text{when } q_t \ll 1,
\end{equation}
using $n_t \, q_t = \mathbb{E}[n_{t,1}]$ (the expected count of class-1
examples following token $t$). For independent estimators of $\hat q_a$ and $\hat q_b$, the variances add:
\begin{equation}
\mathrm{Var}(\hat\Delta_k^{\mathrm{CE}})
\;\approx\; \frac{1 - q_a}{\mathbb{E}[n_{a,1}]} + \frac{1 - q_b}{\mathbb{E}[n_{b,1}]}.
\label{eq:variance}
\end{equation}
For the symmetric regime $\eta = \epsilon = 0.05$ used as a running example,
the population values give $q_a = \eta\epsilon/[\eta\epsilon + (1-\eta)(1-\epsilon)] \approx 0.003$
and $q_b = \eta(1-\epsilon)/[\eta(1-\epsilon) + (1-\eta)\epsilon] = 0.5$,
together with $\mathbb{E}[n_{a,1}] = N\eta\epsilon$ and $\mathbb{E}[n_{b,1}] = N\eta(1-\epsilon)$.
Per unit $N\eta$, the rare cell contributes
$(1 - q_a)/\epsilon \approx 19.9$ versus
$(1 - q_b)/(1-\epsilon) \approx 0.53$ for the abundant cell --- a dominance
factor of $\sim\!38$. The rare cell governs the variance, and the
approximation $\mathrm{Var}(\hat\Delta_k^{\mathrm{CE}}) \approx 1/\mathbb{E}[n_{a,1}]$
used in Eq.~\eqref{eq:rare-cell-bound} is tight up to a small additive
correction from the abundant cell.

\textbf{From variance to sample complexity.}
The success condition is
$\Pr[\hat\Delta_k^{\mathrm{CE}} > G_k] \ge 1-\delta$. Writing
$\mathrm{Var}_\Delta := \mathrm{Var}(\hat\Delta_k^{\mathrm{CE}})$ for brevity, the normal
approximation $\hat\Delta_k^{\mathrm{CE}} \sim \mathcal{N}(\Delta_k^{\mathrm{CE}}, \mathrm{Var}_\Delta)$ gives:
\begin{equation}
\Pr\!\left[\hat\Delta_k^{\mathrm{CE}} > G_k\right]
\;=\; \Phi\!\left(\frac{\Delta_k^{\mathrm{cond}}}{\sqrt{\mathrm{Var}_\Delta}}\right)
\;\geq\; 1 - \delta
\quad\Longleftrightarrow\quad
\mathrm{Var}_\Delta \;\leq\; \frac{(\Delta_k^{\mathrm{cond}})^2}{z_{1-\delta}^2}.
\label{eq:rare-cell-bound}
\end{equation}
Substituting $\mathrm{Var}_\Delta \approx 1/\mathbb{E}[n_{a,1}]$ yields the
rare-cell sample-complexity bound
$\mathbb{E}[n_{a,1}] \geq z_{1-\delta}^2 / (\Delta_k^{\mathrm{cond}})^2$.

\textbf{Numerical verification.}
Table~\ref{tab:nmin-verification} reports the analytically predicted
$N_{\min}(\eta) = z_{1-\delta}^{2}\!/\!\bigl(\eta\epsilon (\Delta_k^{\mathrm{cond}})^2\bigr)$
together with simulated success rates from $4{,}000$ Monte Carlo trials
at each $(\eta, N_{\min})$ pair. Empirical rates exceed the target
$0.9$ uniformly, confirming the bound is mildly conservative for these
moderate-count regimes.
 
\begin{table}[h]
\centering
\caption{Verification of Eq.~\eqref{eq:rare-cell-bound} at $\epsilon{=}0.05$,
$\delta{=}0.10$. Empirical $\Pr[\hat\Delta_k^{\mathrm{CE}} > G_k]$ is
estimated from $4{,}000$ Monte Carlo trials.}
\label{tab:nmin-verification}
\begin{tabular}{rrrr}
\toprule
$\eta$ & $\mathbb{E}[n_{a,1}]$ at $N_{\min}$ & analytic $N_{\min}$ & empirical $\Pr$ \\
\midrule
0.500 & 0.19 & 8     & 0.999 \\
0.200 & 0.19 & 19    & 0.993 \\
0.100 & 0.19 & 39    & 0.991 \\
0.050 & 0.19 & 78    & 0.985 \\
0.020 & 0.20 & 197   & 0.984 \\
0.010 & 0.20 & 396   & 0.982 \\
\bottomrule
\end{tabular}
\end{table}

\textbf{Beyond the asymptote.}
The normal approximation requires $\mathbb{E}[n_{a,1}] \gtrsim 10$ to be
reliable. When the bound itself returns values below this threshold
(as it does for large $\Delta_k^{\mathrm{cond}}$), the Bernoulli
distribution at the rare cell is closer to Poisson and exhibits a
substantial point mass at zero that the normal misses. The practical
threshold is therefore $\sim 10 \cdot z_{1-\delta}^{2}/(\Delta_k^{\mathrm{cond}})^2$,
giving:
 
\begin{table}[h]
\centering
\caption{Practical rare-cell thresholds at varying class-conditional
margin $\Delta_k^{\mathrm{cond}}$.
At SCR's enforced margin $\gamma = 1$, CE training would require
$\sim 30$ rare-cell examples per (product, prefix) pair to match what
SCR achieves by construction.}
\label{tab:practical-thresholds}
\begin{tabular}{rrrr}
\toprule
$\Delta_k^{\mathrm{cond}}$ & asymp. ($\delta{=}0.05$) & practical ($\times 10$) & USPTO regime \\
\midrule
3.0 & 0.32 & 3.2  & easy (data-implied) \\
2.0 & 0.71 & 7.1  & moderate \\
1.0 & 2.71 & 27.1 & SCR-enforced ($\gamma{=}1$) \\
0.5 & 10.83 & 108 & hard \\
\bottomrule
\end{tabular}
\end{table}

\textbf{Empirical implication.}
Counting rare-cell occurrences in USPTO-50k directly: for each (product,
prefix) pair, count training reactions whose reactant SMILES match the
prefix exactly but diverge at the next-token position into a
non-ground-truth class. The fraction of (product, prefix) pairs reaching
the practical threshold of $\sim 30$ is the directly verifiable
empirical statement. This fraction is negligible on USPTO,
formalizing why CE-trained classifiers are structurally inadequate for
guidance regardless of training-set size.

\subsection{Reaching New Regions of the Chemical Space}
\label{app:chemical-space-exploration}


\begin{remark}[Classifier bounds and SCR training]
\label{rem:margin-to-bounds}
\Cref{proposition:reachability} requires absolute bounds $c_1, c_2$ on the
classifier posterior, with the upper bound active only at divergent
prefixes. The ranking margin $\gamma$ alone does not deliver such absolute
bounds: a guided-score gap $\tilde s_\alpha(r^*_k) - \tilde s_\alpha(\hat r_k) \geq \gamma$
constrains a difference between two log-scores, fixing a softmax \emph{ratio}
but not pinning either probability---both can approach zero with the gap
preserved. The two components of the SCR objective combine to supply the
two bounds in \eqref{eq:classifier-bounds}.

\textbf{CE component (absolute calibration $\to c_1$).}
Cross-entropy training on partial sequences pulls
$p_\alpha(\phi_\alpha = 1 \mid \rvec_{\leq l}, p)$ toward the empirical class
posterior, yielding the lower bound $c_1$ on property-satisfying prefixes.
We treat $c_1$ as an empirically verified quantity rather than a theoretical
guarantee: on held-out near-complete USPTO-50k partial sequences (completion
ratio $\geq 0.95$), the SCR classifier assigns mean probability $\approx 0.86$
to the correct class (top-1 accuracy $\approx 92.6\%$ over the 12-way label
space of 11 reaction types plus the contrastive wrong-token class;
\cref{sec:training-property-predictors-experiments}),
giving $c_1 \approx 0.85$.

\textbf{Ranking component (gap at divergent positions $\to c_2$).}
The negative anchor in the SCR ranking loss is constructed exactly at
divergent positions $r^*_k \neq \hat r_k$, where it enforces a logit gap of
$\gamma$. Combined with the $c_1$ floor above, this yields the upper bound
$c_2 < c_1$---supplying \eqref{eq:classifier-bounds} at precisely the
positions where the proof requires it.

The threshold $\lambda^*$ scales inversely with the number of divergent
positions $|D(\rvec, l)|$ between $\rvec^*$ and competing beam sequences. On
retrosynthesis SMILES this is typically $5$--$20$ of $\sim$$50$ tokens
(\cref{sec:ce-shortcomings}), keeping $\lambda^*$ at moderate values in
practice.
\end{remark}

\begin{proposition*}[Reachability under Guided Beam Search, restated]
Let $\beta_n^{(l)}$ be the score of the $n$-th beam (i.e. the beam with
the lowest accepted score) at generation step $l$, and assume
$\mathcal{R}_\phi \setminus \mathrm{Beam}_\omega \neq \emptyset$.
Let $\rvec^* \in \mathcal{R}_\phi \setminus \mathrm{Beam}_\omega$ be any
property-satisfying sequence absent from the unguided beam.
Assume the classifier $p_{\alpha}$ satisfies for all $l \in [1, L]$:
\begin{align}
\label{eq:classifier-bounds}
    p_{\alpha}(\phi_\alpha &= 1 \mid \rvec^*_{\leq l}, p) \geq c_1 \\
    p_{\alpha}(\phi_\alpha &= 1 \mid \rvec_{\leq l}, p) \leq c_2,
    \quad \text{when } \phi_{\alpha}(\rvec, p) = 0 \text{ and } \rvec_{\leq l} \neq \rvec^*_{\leq l}
\end{align}
where $0 < c_2 < c_1 \leq 1$.
The bound on non-property-satisfying sequences is required only at
prefixes that diverge from $\rvec^*$; at shared prefixes, the
classifier output is unconstrained, since both sequences receive the
same per-step classifier contribution and these cancel in any beam
comparison. Then:
\begin{enumerate}
    \item Unguided beam search ($\lambda = 0$) excludes $\rvec^*$ from
    the final output,
    \item There exists $\lambda^* > 0$ such that guided beam search
    with guidance scale $\lambda \geq \lambda^*$ includes at least one
    property-satisfying sequence in the final output.
\end{enumerate}
\end{proposition*}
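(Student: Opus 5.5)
Part 1 is essentially definitional. At $\lambda = 0$ the guided score in \cref{eq:token-guidance} reduces to the generator's log-probability. Guided beam search with $\lambda=0$ is therefore exactly the unguided beam search that produces $\mathrm{Beam}_\omega$. Since $\rvec^* \notin \mathrm{Beam}_\omega$ by hypothesis, it is excluded from the final output. I will make this explicit by noting that the beam update at each step $l$ keeps the $\omega$ highest partial scores. Hence the two procedures coincide step by step, and $\rvec^*$ must have been pruned at some first step $l_0$, where its score fell below $\beta_\omega^{(l_0)}$.

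For part 2, I will write the cumulative guided score of a prefix as
\[
S_\lambda(\rvec_{\le l}) = \sum_{k\le l} \log p_\theta(r_k\mid \rvec_{<k},p) + \lambda \sum_{k\le l}\log p_\alpha(\phi_\alpha=1\mid \rvec_{\le k},p).
\]
Only finitely many length-$\le L$ sequences exist, so the generator log-probabilities along $\rvec^*$ and along any competitor are bounded. I will define $M$ as the maximum, over steps $l$ and over competing prefixes $\rvec_{\le l}$, of the generator score deficit $\sum_{k\le l}[\log p_\theta(r_k\mid\cdot)-\log p_\theta(r^*_k\mid\cdot)]$. I will assume the generator assigns positive probability to every token of $\rvec^*$; otherwise no finite $\lambda$ can help.

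Next, compare $\rvec^*_{\le l}$ with a competitor $\rvec_{\le l}$ satisfying $\phi_\alpha(\rvec,p)=0$. Shared leading tokens contribute identical classifier terms, which cancel as the statement notes. On the set $D$ of divergent steps, the classifier advantage of $\rvec^*$ is at least $|D|\log(c_1/c_2)>0$ by the bounds in \cref{eq:classifier-bounds}. Hence $S_\lambda(\rvec^*_{\le l}) - S_\lambda(\rvec_{\le l}) \ge -M + \lambda |D|\log(c_1/c_2)$. Since $|D|\ge 1$ whenever the prefixes differ, I will choose $\lambda^* = M/\log(c_1/c_2)$ plus an arbitrarily small amount. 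Then for $\lambda\ge\lambda^*$, $\rvec^*_{\le l}$ strictly beats every non-property-satisfying competitor at every step.

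To conclude, I will argue by contradiction. Suppose at some step $\rvec^*_{\le l}$ is pruned, so at least $\omega$ prefixes outrank it. Each of these must extend to a property-satisfying sequence, because it cannot be a prefix of a $\phi_\alpha=0$ completion by the previous paragraph. Hence either $\rvec^*$ survives to the end, or the beam is filled with property-satisfying prefixes, and in both cases some property-satisfying sequence is output. The main obstacle is exactly this last step. The bound $\le c_2$ is stated for prefixes of sequences with $\phi_\alpha(\rvec,p)=0$, but a partial prefix may admit completions of both kinds. I plan to handle this by interpreting "$\phi_\alpha(\rvec,p)=0$" as quantifying over the eventual completed sequence that the beam tracks. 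The argument would then be applied to final outputs, where every output sequence is complete and either satisfies $\phi_\alpha$ or was dominated by $\rvec^*$ at each of its prefixes. If this interpretive step proves too weak, I would fall back to proving the weaker but clean claim that $\rvec^*$ itself survives whenever all its competitors at pruning time are $\phi_\alpha=0$ completions.
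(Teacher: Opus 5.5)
Your proposal is correct and follows essentially the same route as the paper: a cumulative guided score, cancellation of classifier terms on shared prefixes, a $|D(\rvec,l)|\log(c_1/c_2)$ classifier advantage on divergent positions, a threshold $\lambda^*$ built from the generator's score deficit, and a contradiction at the step where $\rvec^*_{\le l}$ would be pruned. Your uniform use of $|D|\ge 1$ is harmless, and the small positive slack you add makes the inequality strict, so tie-breaking no longer matters.

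The ``main obstacle'' you flag is not a real obstacle, so you need neither the reinterpretation nor the fallback. Because $p_\alpha$ sees only the prefix, the stated hypothesis already bounds every divergent prefix that admits at least one $\phi_\alpha=0$ completion. Hence any beam member outranking $\rvec^*_{\le l}$ has only property-satisfying completions, and since every final output descends from that beam, all final outputs satisfy $\phi_\alpha$. This is exactly the point the paper's proof glosses over when it asserts that every member of $B^g_l$ ``corresponds to a non-property-satisfying full sequence''.
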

\begin{proof}
We define cumulative scores at generation step $l$ as
\begin{align}
F(\rvec_{\leq l}) &= \sum_{t=1}^{l} \log p_\theta(r_t \mid \rvec_{<t}, p), \\
F_{\text{guided}}(\rvec_{\leq l}) &= F(\rvec_{\leq l}) + \lambda \sum_{t=1}^{l} \log p_\alpha(\phi_\alpha = 1 \mid \rvec_{\leq t}, p).
\end{align}

\textbf{Statement 1} follows directly from $\rvec^* \in \mathcal{R}_\phi \setminus \mathrm{Beam}_\omega$:
$\rvec^*$ is by definition absent from the unguided beam, hence excluded from
the unguided final output.

\textbf{Statement 2.} We show by induction on $l \in [1, L]$ that, for
sufficiently large $\lambda$, $\rvec^*_{\leq l} \in B^g_l$, where $B^g_l$ denotes
the guided beam at step $l$. Since $\rvec^* \in \mathcal{R}_\phi$, this gives
$B^g_L \cap \mathcal{R}_\phi \neq \emptyset$, establishing the conclusion.

For any competitor $\rvec$ with $\rvec_{\leq l} \neq \rvec^*_{\leq l}$, let
\begin{equation}
D(\rvec, l) \;:=\; \{\, t \in [1, l] : \rvec_{\leq t} \neq \rvec^*_{\leq t} \,\}
\end{equation}
denote its divergent positions up to step $l$; since $\rvec_{\leq l} \neq \rvec^*_{\leq l}$,
$|D(\rvec, l)| \geq 1$. Define the per-step threshold
\begin{equation}
\lambda^*_l \;:=\; \max\!\left(0,\;
\max_{\substack{\rvec \notin \mathcal{R}_\phi \\ \rvec_{\leq l} \neq \rvec^*_{\leq l}}}
\frac{F(\rvec_{\leq l}) - F(\rvec^*_{\leq l})}{|D(\rvec, l)| \cdot \log(c_1/c_2)}\right),
\qquad \lambda^* := \max_{l \in [1, L]} \lambda^*_l.
\end{equation}
The denominator is positive since $c_1 > c_2 > 0$.

\textbf{Inductive step.} Fix $\lambda \geq \lambda^*$. Assume
$\rvec^*_{\leq l-1} \in B^g_{l-1}$ (vacuously true at $l = 1$). Then
$\rvec^*_{\leq l}$ is a candidate at step $l$. Suppose for contradiction that
$\rvec^*_{\leq l} \notin B^g_l$. Then $B^g_l$ consists of $\omega$ candidates each
with $F_{\text{guided}}(\cdot) > F_{\text{guided}}(\rvec^*_{\leq l})$, and each
satisfies $\rvec_{\leq l} \neq \rvec^*_{\leq l}$.

If \emph{any} member of $B^g_l$ is the prefix of a sequence in $\mathcal{R}_\phi$
that survives to $B^g_L$, we are done: the final beam contains a
property-satisfying sequence, and Statement 2 holds without recovering
$\rvec^*$ specifically. Otherwise, every $\rvec \in B^g_l$ corresponds to a
non-property-satisfying full sequence. Decomposing $[1, l]$ into shared and
divergent positions, the per-step classifier contributions cancel at shared
positions ($\rvec_{\leq t} = \rvec^*_{\leq t}$ implies identical
$p_\alpha$-output), leaving
\begin{align*}
F_{\text{guided}}(\rvec_{\leq l}) - F_{\text{guided}}(\rvec^*_{\leq l})
&= \bigl[F(\rvec_{\leq l}) - F(\rvec^*_{\leq l})\bigr]
+ \lambda \!\!\sum_{t \in D(\rvec, l)}\!\! \bigl[\log p_\alpha(\phi_\alpha = 1 \mid \rvec_{\leq t}, p) - \log p_\alpha(\phi_\alpha = 1 \mid \rvec^*_{\leq t}, p)\bigr] \\
&\leq \bigl[F(\rvec_{\leq l}) - F(\rvec^*_{\leq l})\bigr] - \lambda \, |D(\rvec, l)| \log(c_1/c_2),
\end{align*}
where the inequality uses $\log p_\alpha(\phi_\alpha = 1 \mid \rvec^*_{\leq t}, p) \geq \log c_1$ and
$\log p_\alpha(\phi_\alpha = 1 \mid \rvec_{\leq t}, p) \leq \log c_2$ at $t \in D(\rvec, l)$,
both supplied by \eqref{eq:classifier-bounds} (the upper bound applies precisely because
$\rvec_{\leq t} \neq \rvec^*_{\leq t}$ at divergent positions). Combining with
$\lambda \geq \lambda^*_l$ gives
$F_{\text{guided}}(\rvec_{\leq l}) - F_{\text{guided}}(\rvec^*_{\leq l}) \leq 0$,
contradicting the assumption that every $\rvec \in B^g_l$ strictly outranks
$\rvec^*_{\leq l}$. Hence $\rvec^*_{\leq l} \in B^g_l$, completing the induction.

Setting $l = L$ gives $\rvec^* \in B^g_L$, and since $\rvec^* \in \mathcal{R}_\phi$,
the final guided output contains a property-satisfying sequence.
\end{proof}

\section{Guided Properties}
\label{app:properties-additional}
We guide generation toward two properties of particular relevance to synthetic chemists:
reaction type and starting material similarity.
Both arise naturally in synthesis planning and complement each other---reaction type
controls the disconnection strategy, while starting material similarity steers routes
toward accessible building blocks.

\subsection{Reaction type}
Chemists often prefer specific reaction mechanisms based on their expertise, laboratory
capabilities, and available equipment~\citep{nicolaou1996classics}.
Early computational work showed that conditioning on reaction type substantially improves
retrosynthesis accuracy, with template-based methods gaining up to 11\% in top-10
predictions when the reaction class is known a priori~\citep{segler2017neuralsym}.
We adopt the 12 superclass taxonomy of \citet{schneider-uspto}, available directly for
USPTO-50k; for USPTO-190, we assign classes using RXN-Insight~\citep{rxn_insight}.

\subsection{Starting material similarity}
Practical synthesis routes must ultimately connect to commercially available or
laboratory-accessible building blocks~\citep{Yu2024desp,Armstrong2024TangoStar}.
We operationalize this constraint by guiding toward high Tanimoto similarity
(Morgan fingerprints) between the predicted reactants and a designated starting material,
encouraging the model to propose routes through specified building blocks without
requiring exact matches at the single-step level.

\section{Implementation and Experimental Details}
\label{app:datasets-and-processing-additional}
We present here the datasets, models, training and inference procedures, baselines, and
evaluation pipeline used in this work. For the new datasets introduced, we explain the
generation procedure, intended purpose, and key statistics.

\subsection{Reaction and route data}
\label{app:reaction-route-data}
We rely on two publicly available reaction datasets and derive a third for single-step evaluation.

\paragraph{USPTO-50k.}
The USPTO-50k dataset \citep{schneider-uspto} contains approximately 50{,}000 reactions
extracted from the United States Patent and Trademark Office (USPTO) filings. Each reaction is
annotated with one of 10 reaction superclasses (plus a miscellaneous class) defined by
\citet{schneider-uspto} based on functional group transformations. We use the standard
train/validation/test splits and train both single-step models and reaction type classifiers on this data.

\paragraph{USPTO-190 (USPTO-Hard).}
USPTO-190, also known as USPTO-Hard, was introduced by \citet{chen2020retrostar} as a
challenging multi-step benchmark. It consists of 190 target molecules whose synthesis routes
were curated from USPTO patent literature. Each target has a ground-truth synthesis route
comprising multiple reaction steps. Unlike USPTO-50k, this dataset does not include reaction type annotations;
we use RXN-Insight \citep{rxn_insight} to assign Schneider reaction classes to the individual steps.

\paragraph{USPTO-190-Steps (new).}
To evaluate single-step guidance on a diverse set of products, we extract the individual reactions
composing the test routes of USPTO-190 and deduplicate them by product SMILES. This yields 640 unique
products, each associated with its ground-truth precursors and reaction type. We refer to this dataset
as USPTO-190-Steps throughout the paper. The distribution of reaction types in this dataset reflects the
composition of the original multi-step routes, with Heteroatom Alkylation (class~2),
Acylation (class~9), and C--C Coupling (class~0) being the most common classes.

\subsection{Reaction type partial-sequence data}
\label{app:reaction-type-data}
We generate two datasets for training the reaction classifiers: one from USPTO-50k
and one from USPTO-190. The procedure in both cases is the same. The procedure described below is applied to the training and validation
subsets of both USPTO-190-steps and USPTO-50k.
\begin{itemize}
\item We augment each product with 3 root-aligned canoncalizations, obtained by first generating 20 canonicalizations then selecting the
3 most different ones.
\item (Sample): We then sample for each product smiles (including the root-aligned augmentations) 100 samples using the generator.
\item We then split the samples dataset by reaction type, thus obtaining precursors for each reaction type from the products available in the dataset
\item (Contrast): Next we generate the partial sequences by splitting each precursor smiles at every position, and augmenting with contrastive samples
by swapping out the last token with 3 random tokens sampled from the vocabulary.
\item Lastly, we subsample the partial sequences dataset to obtain a training set of 2M sequences and a validation set of 200K sequences covering different
completion ratios.
\end{itemize}
\begin{table}[htb!]
\caption{Class-wise statistics of the reaction-type partial-sequence dataset built from USPTO-50k: number of original reactions, generator samples, and the (oversampled) per-class subsample used for training. The training and validation sets additionally contain 2.5M and 1.0M contrastive wrong-token sequences (class~11), giving totals of 4.84M training and 1.77M validation partial sequences.}
\label{tab:reaction-type-50k-stats}
\centering
\small
\begin{tabular}{lccccccccccc}
\toprule
    Dataset & 0 & 1 & 2 & 3 & 4 & 5 & 6 & 7 & 8 & 9 & 10 \\
\midrule
    \multicolumn{12}{l}{\textit{uspto50k-train}} \\[0pt]
    \quad original & 12.4k & 8.9k & 4.3k & 632 & 1.0k & 4.0k & 6.3k & 632 & 106 & 649 & 853 \\
    \quad samples & 472.9k & 241.6k & 347.5k & 336.3k & 75.5k & 69.1k & 267.0k & 63.4k & 34.0k & 329.9k & 310.7k \\
    \quad subsamples & 227k & 227k & 227k & 227k & 227k & 227k & 227k & 227k & 68k & 227k & 227k \\
    \multicolumn{12}{l}{\textit{uspto50k-val}} \\[0pt]
    \quad original & 1.5k & 1.1k & 538 & 75 & 127 & 487 & 789 & 71 & 15 & 90 & 113 \\
    \quad samples & 256.6k & 163.1k & 141.1k & 129.6k & 17.9k & 31.4k & 122.4k & 18.1k & 12.6k & 132.5k & 116.8k \\
    \quad subsamples & 91k & 91k & 91k & 53k & 81k & 91k & 91k & 48k & 14k & 53k & 70k \\
\bottomrule
\end{tabular}
\end{table}

\subsection{Tanimoto similarity partial-sequence data}
\label{app:tanimoto-data}
For the starting material guidance task, we train a tanimoto similarity predictor that estimates the
Morgan fingerprint Tanimoto similarity between the completed reactants and a designated starting material.
The predictor follows the same SCR framework as the reaction type classifier: partial sequences are generated
from ground-truth and on-policy samples, augmented with contrastive wrong-token examples,
and trained with the combined cross-entropy (adapted for regression) and ranking loss objectives,
but the classification head is replaced with an MSE regression head.
The training data is constructed analogously to the reaction type data (\cref{app:reaction-type-data}),
with target labels being the Tanimoto similarity between the completed reactant set and the designated starting material
in each route.

\subsection{Property predictor training}
\label{app:experimental-setup}
All property classifiers are vanilla transformer encoders with 4~layers, 4~attention heads,
128 hidden dimensions, and 0.1~dropout. We train with Adam (learning rate $10^{-3}$, weight decay $10^{-5}$)
on 4.84M partial sequences for the reaction type task (\cref{tab:reaction-type-50k-stats}) and an equivalently
constructed dataset for the tanimoto similarity task.
The ranking loss margin is $\gamma = 1.0$ and the ranking weight is $\mu = 1.0$ unless stated otherwise.
Each classifier is validated on a held-out set of 1.77M partial sequences generated from ground-truth
precursors and round-trip validated generator samples, augmented with contrastive wrong-token completions.
The full training procedure is summarized in Algorithm~\ref{alg:app-classifier-training}.

\begin{algorithm}[t]
\caption{Training Token-Level Property Classifier}
\label{alg:app-classifier-training}
\begin{algorithmic}[1]
\REQUIRE Ground-truth dataset $\mathcal{D} = \{(\rvec^{(i)}, p^{(i)}, y^{(i)})\}_{i=1}^N$, generator $p_\theta$, vocabulary $\mathcal{V}$
\REQUIRE Hyperparameters: margin $\gamma$, ranking weight $\mu$, sampling ratio $\rho$
\ENSURE Trained classifier $p_\alpha$
\STATE Initialize classifier $p_\alpha$
\FOR{each training iteration}
    \STATE Sample minibatch $\{(\rvec, p, y)\}$ from $\mathcal{D}$
    \STATE $\mathcal{B} \gets \emptyset$ \MYCOMMENT{Training batch}
    \FOR{each $(\rvec, p, y)$ in minibatch}
        \STATE Sample position $k \sim \text{Uniform}(1, |\rvec|)$
        \STATE $\mathcal{B} \gets \mathcal{B} \cup \{(\rvec_{\leq k}, p, y)\}$ \MYCOMMENT{Ground-truth partial sequence}
        \STATE Sample $\tilde{r}_k \sim \text{Uniform}(\mathcal{V})$
        \STATE $\mathcal{B} \gets \mathcal{B} \cup \{((\rvec_{<k}, \tilde{r}_k), p, y_{\text{wrong}})\}$ \MYCOMMENT{Wrong-token augmentation}
        \IF{$u < \rho$ where $u \sim \text{Uniform}(0,1)$}
            \STATE $\tilde{\rvec} \sim p_\theta(\cdot \mid p)$ \MYCOMMENT{Sample from generator}
            \STATE $\tilde{y} \gets \textsc{GetReactionClass}(\tilde{\rvec}, p)$
            \STATE $\mathcal{B} \gets \mathcal{B} \cup \{(\tilde{\rvec}_{\leq k}, p, \tilde{y})\}$ \MYCOMMENT{On-policy sample}
        \ENDIF
    \ENDFOR
    \STATE $\mathcal{L}_{\text{CE}} \gets \frac{1}{|\mathcal{B}|}\sum_{(\rvec_{\leq k}, p, y) \in \mathcal{B}} -\log p_\alpha(y \mid \rvec_{\leq k}, p)$
    \STATE Let $\mathcal{B}' \subseteq \mathcal{B}$ denote the ground-truth partial sequences, with $r^*_k := \rvec_k$ \MYCOMMENT{Excludes wrong-token and on-policy samples}
    \STATE $\hat{r}_k \gets \argmax_{r \neq r^*_k} p_\theta(r \mid \rvec_{<k}, p)$ \MYCOMMENT{Generator's preferred alternative to ground truth}
    \STATE $\tilde s_\alpha(r) \gets \log p_\theta(r \mid \rvec_{<k}, p) + \log p_\alpha(y \mid \rvec_{<k}, r, p)$ \MYCOMMENT{Guided log-score (generator + classifier), log-softmax-normalized over $\{r^*_k\} \cup \text{top-}k(p_\theta)$}
    \STATE $\mathcal{L}_{\text{rank}} \gets \frac{1}{|\mathcal{B}'|}\sum_{(\rvec_{\leq k}, p, y) \in \mathcal{B}'} \max\big(0,\, \gamma + \tilde s_\alpha(\hat{r}_k) - \tilde s_\alpha(r^*_k)\big)$
    \STATE Update $p_\alpha$ to minimize $\mathcal{L}_{\text{CE}} + \mu \mathcal{L}_{\text{rank}}$
\ENDFOR
\STATE \textbf{return} $p_\alpha$
\end{algorithmic}
\end{algorithm}

\subsection{Inference}
\label{app:lookahead-inference}

\paragraph{Base generator.}
We use the product-reactant-aligned generator described in \cref{sec:single-step-retro} with a beam size of~10.
To obtain 100 samples per product despite a beam width of~10, we follow the RSMILES protocol \citep{Zhong2022Rsmiles}---which is also the syntheseus default for template-free models---and augment each input product with 20 canonical-equivalent SMILES randomizations. Beam search is run independently per augmentation, and the resulting candidates are pooled and deduplicated to produce the per-product sample set (the same procedure used to construct training data, \cref{app:reaction-type-data}).
We evaluate two checkpoints: one trained on USPTO-50k and one on USPTO-190,
to study how guidance interacts with generators of different training-set coverage.

\paragraph{Default inference parameters.}
Guided generation uses guidance scale $\lambda = 1.0$, guidance onset length $L_{\min} = 5$,
and $N = 72$ candidate tokens per step, unless stated otherwise.

\paragraph{Lookahead variant.}
We compare three inference strategies for applying guidance during search:
(i)~\emph{no lookahead} (SCR-nla), which applies guidance at a fixed scale
$\lambda$ throughout generation; (ii)~\emph{lookahead} (SCR-la), which uses
Algorithm~\ref{alg:lookahead} to adaptively select $\lambda$ per target from a
candidate set; and (iii)~\emph{lookahead-fast} (SCR-la-fast), which performs
exhaustive parameter search over all candidates.
All variants use reaction-type guidance on USPTO-190 with 100 model calls.
By default we apply no lookahead; when the lookahead variant is used we denote it as SCR-la.

\begin{algorithm}[ht]
    \caption{Lookahead Parameter Selection for Guided Generation}
    \label{alg:lookahead}
    \begin{algorithmic}[1]
    \REQUIRE Product molecule $p$, total sample budget $N$, candidate parameters $\Lambda = \{\lambda_1, \ldots, \lambda_m\}$, exploration samples per parameter $n_{\text{explore}}$
    \ENSURE Set of $N$ generated precursors with optimal guidance
    \STATE $n_{\text{exploit}} \gets N - m \cdot n_{\text{explore}}$ \COMMENT{Reserve samples for exploitation}
    \STATE $\mathcal{P} \gets \emptyset$ \COMMENT{Initialize precursor set}

    \FOR{$\lambda \in \Lambda$}
        \STATE Generate $n_{\text{explore}}$ precursors: $\{r_1^\lambda, \ldots, r_{n_{\text{explore}}}^\lambda\} \sim p_\theta^\lambda(\cdot | p, \phi_\alpha=1)$
        \STATE Compute property scores: $s_i^\lambda \gets \phi_\alpha(r_i^\lambda, p)$ for $i = 1, \ldots, n_{\text{explore}}$
        \STATE $\bar{s}^\lambda \gets \frac{1}{n_{\text{explore}}} \sum_{i=1}^{n_{\text{explore}}} s_i^\lambda$ \COMMENT{Mean property satisfaction}
        \STATE $\mathcal{P} \gets \mathcal{P} \cup \{r_1^\lambda, \ldots, r_{n_{\text{explore}}}^\lambda\}$
    \ENDFOR

    \STATE $\lambda^* \gets \argmax_{\lambda \in \Lambda} \bar{s}^\lambda$ \COMMENT{Select best parameter}

    \STATE Generate $n_{\text{exploit}}$ additional precursors with $\lambda^*$: $\{r_1^*, \ldots, r_{n_{\text{exploit}}}^*\} \sim p_\theta^{\lambda^*}(\cdot | p, \phi_\alpha=1)$ \COMMENT{Exploitation phase}
    \STATE $\mathcal{P} \gets \mathcal{P} \cup \{r_1^*, \ldots, r_{n_{\text{exploit}}}^*\}$

    \STATE \textbf{return} $\mathcal{P}$
    \end{algorithmic}
\end{algorithm}

\subsection{Baselines}
For single-step evaluation, we compare against all baselines available in the
Syntheseus library \citep{maziarz2023syntheseus}, including both template-free methods
(MEGAN \citep{sacha2021megan}, Chemformer, Graph2Edits) and template-based methods
(LocalRetro \citep{chen2021localretro}, RetroKNN, GLN, MHNReact \citep{seidl2021mhnreact}).
We additionally include NeuralSym \citep{segler2017neuralsym}, a template-based method
commonly used in multi-step planning.
All methods generate 100 samples per product to ensure an equal computational budget.
Template-based baselines are marked with~$^\mathcal{T}$ in result tables.

\subsection{Evaluation pipeline}
\label{app:evaluation}

\paragraph{Chemical validity.}
Generated SMILES strings are validated using Syntheseus \citep{maziarz2023syntheseus},
which checks for parsability and chemical validity via RDKit.
Invalid SMILES are discarded before computing any downstream metrics.

\paragraph{Reaction type classification.}
We use RXN-Insight \citep{rxn_insight} to classify each valid generated reaction
into one of the Schneider reaction superclasses \citep{schneider-uspto}.
A sample is considered \emph{class-correct} if the predicted reaction type matches
the guidance target.

\paragraph{Round-trip accuracy.}
To assess whether a generated precursor set plausibly produces the target product,
we apply the forward model of \citet{Zhong2022Rsmiles} to the generated
reactants and check whether the predicted product matches the original target
(exact SMILES match after canonicalization). This round-trip check provides a
proxy for chemical feasibility without requiring wet-lab validation.

\paragraph{Steering breadth.}
For each product in the dataset, we guide generation toward every available
reaction type and record whether at least one sample is class-correct.
The \emph{guidance success} is the count of products
achieving this for a given target class. The \emph{steering breadth} is defined
as the average number of reaction types for which guidance succeeds per product,
normalized by the total number of classes.
We also report stricter variants that additionally require RXN-Insight reaction
identification (+RXN-I) or round-trip validity (+RT), measuring whether steered
samples are not only class-correct but also chemically identified or feasible.

\paragraph{Jaccard overlap.}
To quantify whether guidance explores genuinely different chemical space,
we compute the Jaccard similarity between the set of unique valid SMILES
produced by guided and unguided generation for each product, then report
the average across all products.

\subsection{Compute resources}
\label{app:compute}
All experiments were run on the LUMI supercomputer using a single AMD Instinct
MI250X GCD per job (\texttt{small-g} partition, 100\,GB RAM); no multi-GPU
training or inference was used.

\paragraph{Property predictor training.}
The reaction-type and Tanimoto-similarity classifiers are small Transformers
(${\sim}5$\,MB; 4 layers, hidden dim 128, 4 heads) trained on partial sequences
sampled from the base generator. On USPTO-Full (1.36\,M training prefixes), one
training epoch takes approximately 1030\,s; the reported reaction-type
checkpoint converges by epoch 35, for roughly $10$ GPU-hours of pure training
time. On USPTO-50K, an epoch takes approximately 770\,s and the reported
checkpoint converges by epoch 45, for roughly $10$--$12$ GPU-hours. Tanimoto
predictors train at the same per-epoch cost on each dataset.

\paragraph{Single-step inference.}
With the R-SMILES generator at beam size 10 and 20 augmentations
(100 candidates per target), the unguided baseline takes
approximately 10--15\,s per target on a single MI250X.
SCR-guided inference (reaction type or Tanimoto, $\lambda{=}1.0$,
including round-trip evaluation) takes approximately 50--60\,s per target,
roughly a $4$--$5{\times}$ overhead over the unguided baseline due to per-token
classifier scoring of the $n_{\mathrm{candidates}}$ continuations.

\paragraph{Multi-step search.}
Retro$^{*}$ search with a 100-call model budget runs in approximately 600--700\,s
of wall-clock per target with SCR-nla guidance (cf.\ the 643.6\,s vs.\ 734.3\,s
comparison in \cref{sec:multi-step}); search jobs were dispatched in
SLURM array tasks of 5 targets each.

\section{Additional Results}
\label{app:additional-results}
\subsection{Single-step synthesis}
\label{app:additional-results-single-step}

\begin{figure*}[htbp]
    \centering
    \resizebox{0.60\textheight}{!}{
        \begin{subfigure}[t]{0.48\textwidth}
        \centering
        \vspace{0pt}
        \textbf{(1) Baseline failure rates}
        \vspace{0.5em}
        \begin{tabular}{lcc}
        \toprule
        & TG & Generator \\
        \midrule
        Failure rate & - & 287 \\
        Exact recovery & 113 & - \\
        RT recovery & 231 & - \\
        \bottomrule
        \end{tabular}
        \vspace{1.5em}
        \textbf{(2) Detailed guidance effects}
        \vspace{0.5em}
        \begin{tabular}{lcc}
        \toprule
        & TG & Generator \\
        \midrule
        Avg. unique samples & \textbf{46.4} & 30.5 \\
        Avg. class correct & \textbf{13.4} & 8.9 \\
        Avg. name correct & \textbf{19.1} & 15.7 \\
        Avg. RT correct & 18.6 & \textbf{18.9} \\
        \bottomrule
        \end{tabular}
        \caption{Effect of guidance on generator samples}
        \label{fig:failure}
        \end{subfigure}
        \hfill
        \begin{subfigure}[t]{0.48\textwidth}
        \centering
        \vspace{0pt}
        \includegraphics[width=\textwidth]{figures/effect-of-steering.pdf}
        \caption{Samples with and without guidance}
        \label{fig:jaccard}
        \end{subfigure}
}
\caption{Effect of guidance on the samples of the generator. The tables show metrics computed over 100 samples per product,
    while the image visualizes samples for a single product.}
\label{fig:effect-of-guidance-on-samples}
\end{figure*}

\paragraph{Ablation on the training paradigm}
To validate the contribution of each SCR component, we train classifiers with different combinations
of training data source (ground-truth vs.\ on-policy generator samples), contrastive augmentation
(with or without wrong-token examples), and ranking loss (with various margin and weight settings).
We then use each classifier to guide single-step generation on USPTO-190-Steps (640 products, 100 samples per product)
and compare route recovery, top-$k$ accuracy, class accuracy, and round-trip accuracy.
SCR with on-policy samples and ranking loss consistently outperforms all other configurations,
confirming that all three components---on-policy training, contrastive augmentation, and margin-based ranking---are
necessary for effective token-level guidance. The detailed results are consistent with the validation accuracy
trends shown in \cref{fig:src-ablation}.

\paragraph{Guiding towards ground truth data}
As an upper-bound analysis, we guide each product toward its ground-truth reaction type and measure
how many complete routes can be recovered---i.e., whether the ground-truth reactant appears among
the precursors returned by the single-step model at all.
\Cref{tab:ground-truth-reaction-type-topk} reports full route recovery, top-$k$ accuracy, and the
average number of class-correct samples per product, broken down by reaction class.

\begin{table}[htb!]
\caption{Class analysis with ground-truth reaction types on USPTO-190 (n\_products=640, samples=100).We report the number of routes recovered completely, in addition to top-1, top-5, top-50, and the average number of samples that are of the correct type per product.}
\label{tab:ground-truth-reaction-type-topk}
\centering
\small
\setlength{\tabcolsep}{4pt}
\begin{tabular}{lcccccccccccc}
    \toprule
    Method & Route & Top1 & Top5 & Top50 &  Cls=0  &  Cls=1  &  Cls=2  &  Cls=3  &  Cls=7  &  Cls=8  &  Cls=9  &  Cls=10  \\
     & ($\uparrow$) & ($\uparrow$) & ($\uparrow$) & ($\uparrow$) & ($\uparrow$) & ($\uparrow$) & ($\uparrow$) & ($\uparrow$) & ($\uparrow$) & ($\uparrow$) & ($\uparrow$) & ($\uparrow$) \\
    \midrule
    \multicolumn{13}{l}{\textit{Trained on USPTO-50k}} \\[2pt]
    \quad Localretro$^\mathcal{T}$ & \underline{7.00} & \underline{0.33} & 0.48 & \underline{0.52} & 18.29 & 13.56 & \underline{22.01} & 4.42 & 2.05 & 0.25 & 9.36 & 3.07 \\
    \quad GLN$^\mathcal{T}$ & 1.00 & 0.29 & 0.45 & 0.48 & 16.65 & 8.29 & 12.59 & 0.23 & 0.89 & 0.25 & \textbf{15.36} & 3.13 \\
    \quad RetroKNN$^\mathcal{T}$ & \underline{7.00} & \underline{0.33} & 0.48 & 0.51 & 19.49 & 13.45 & 20.13 & 4.65 & 1.84 & 0.25 & 9.51 & 3.47 \\
    \quad MHNReact$^\mathcal{T}$ & 3.00 & 0.29 & 0.45 & 0.48 & 15.75 & 6.47 & 10.53 & 0.23 & 0.95 & 0.25 & 8.08 & 2.33 \\
    \quad Megan & 2.00 & 0.28 & 0.43 & 0.49 & \textbf{21.03} & \underline{16.63} & \textbf{22.91} & 2.32 & \underline{4.53} & 1.00 & 8.72 & \underline{8.00} \\
    \quad Graph2Edits & 1.00 & 0.32 & 0.47 & 0.49 & 15.85 & 6.32 & 17.12 & \textbf{16.90} & 3.05 & 0.25 & 8.05 & 2.44 \\
    \quad Chemformer & 2.00 & 0.30 & 0.38 & 0.38 & 2.96 & 1.67 & 3.86 & 3.13 & 1.00 & 0.00 & 2.13 & 2.00 \\
    \quad Generator & 5.00 & \underline{0.33} & \underline{0.49} & 0.51 & 15.04 & 12.52 & 12.13 & 12.48 & 3.53 & \underline{2.50} & 6.49 & 5.96 \\
    \rowcolor{highlightgreen}
    \quad SCR$_{\text{rxn}}$ & \textbf{9.00} & 0.32 & \underline{0.49} & \textbf{0.53} & \underline{20.89} & \textbf{19.60} & 19.52 & 14.71 & \textbf{4.89} & \textbf{7.00} & \underline{10.49} & \textbf{8.47} \\
    \rowcolor{highlightgreen}
    \quad SCR$_{\text{tan}}$ & 6.00 & \textbf{0.35} & \textbf{0.51} & \textbf{0.53} & 17.28 & 14.07 & 14.29 & \underline{14.77} & 3.68 & 2.00 & 6.90 & 6.00 \\[4pt]
    \multicolumn{13}{l}{\textit{Trained on USPTO-190}} \\[2pt]
    \quad NeuralSym$^\mathcal{T}$ & 31.00 & 0.48 & 0.65 & 0.69 & 8.88 & 3.39 & 5.47 & 1.19 & 1.95 & 1.25 & 6.05 & 3.13 \\
    \quad Generator & 54.00 & \textbf{0.59} & \underline{0.76} & 0.81 & 26.89 & 23.43 & 23.80 & 37.65 & 6.89 & 5.00 & 12.92 & 11.40 \\
    \rowcolor{highlightgreen}
    \quad SCR$_{\text{rxn}}$ & \textbf{75.00} & 0.55 & \underline{0.76} & \underline{0.82} & \textbf{44.35} & \textbf{38.80} & \textbf{40.34} & \textbf{55.81} & \textbf{12.74} & \textbf{13.75} & \textbf{22.46} & \textbf{25.11} \\
    \rowcolor{highlightgreen}
    \quad SCR$_{\text{tan}}$ & \underline{73.00} & \underline{0.58} & \textbf{0.84} & \textbf{0.88} & \underline{34.59} & \underline{30.44} & \underline{29.10} & \underline{39.45} & \underline{7.84} & \underline{9.25} & \underline{18.08} & \underline{14.69} \\
    \bottomrule
\end{tabular}
\end{table}

With a USPTO-50k checkpoint, SCR$_{\text{rxn}}$ recovers 9 complete routes compared to 7 for
the best unguided baseline (RetroKNN$^\mathcal{T}$), an improvement of 2 additional routes.
The gains are most pronounced for rare reaction types where the base model struggles:
for class~8, SCR$_{\text{rxn}}$ produces 7.00 class-correct samples per product versus
2.50 for the unguided generator, and for class~10 the counts rise from 5.96 to 8.47.
With a USPTO-190 checkpoint, the gap widens further: SCR$_{\text{rxn}}$ recovers 75 routes
versus 54 for the unguided generator---a 39\% improvement---with per-class sample counts increasing
consistently across all reaction types, and the largest absolute gains again appearing
on underrepresented classes (e.g., class~8: 13.75 vs.\ 5.00; class~10: 25.11 vs.\ 11.40).
Tanimoto guidance (SCR$_{\text{tan}}$) also improves route recovery
(6 routes on USPTO-50k, 73 on USPTO-190) despite not directly targeting reaction type,
suggesting that starting-material-aware guidance indirectly diversifies the reaction
mechanisms explored by the model.

\paragraph{Guiding towards all reaction type assignment pairs}
To evaluate steering beyond the ground-truth class, we guide each product toward every
non-ground-truth reaction type and compare the number of class-correct products against
the unguided baseline. \Cref{fig:delta-heatmaps} shows the difference in success counts,
stratified by source class (row) and target class (column).
Guidance improves steering rates for the vast majority of class pairs: most cells are
positive, indicating that the guided model produces more class-correct samples than
the unguided baseline for the corresponding (source, target) combination.
The improvements are particularly strong for off-diagonal pairs where the target class
differs substantially from the source class, confirming that guidance steers the model
into genuinely different regions of the chemical space rather than reinforcing existing
biases.
A small number of cells show negative or near-zero deltas, typically involving rare
classes with few training examples (e.g., class~8) or class pairs with high mechanistic
overlap where the base model already produces some class-correct samples by chance.

\begin{figure*}[htbp]
    \centering
    \resizebox{0.6\textheight}{!}{
    \includegraphics[width=\textwidth]{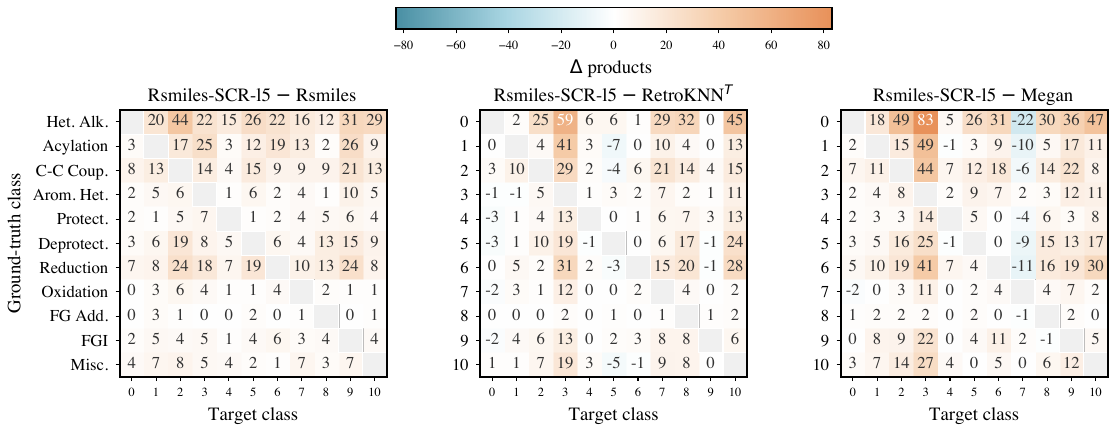}
}
\caption{The difference between guided and baseline results for steering towards all non-ground-truth reaction types.
The heatmaps show the difference in the number of products with precursors from non-ground-truth target classes, stratified per class.
Our guidance scheme leads to improved rates for almost all class pairs, with exceptions discussed in the text.}
\label{fig:delta-heatmaps}
\end{figure*}

\paragraph{Single-step results on USPTO-50k}
We repeat the steering analysis on the full USPTO-50k test set to verify that the
trends observed on USPTO-190-Steps generalize to a larger and more diverse benchmark.
\Cref{fig:steering-breadth-50k} reports the steering breadth of all baselines and
SCR$_{\text{rxn}}$ on USPTO-50k, and \cref{fig:delta-heatmaps-50k} shows the
difference in class-correct product counts between SCR$_{\text{rxn}}$ and
representative baselines (the unguided generator ablation, the strongest template-based
method, and the strongest template-free method).
The results mirror the USPTO-190-Steps findings: SCR$_{\text{rxn}}$ achieves the
highest steering breadth, and the delta heatmaps are predominantly positive across
all (source, target) class pairs.

\begin{figure*}[htbp]
    \centering
    \includegraphics[width=\textwidth]{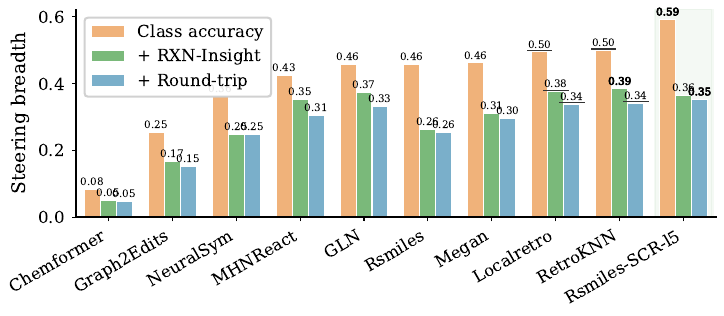}
\caption{Steering breadth on USPTO-50k for all baselines and SCR$_{\text{rxn}}$.
As on USPTO-190-Steps, SCR$_{\text{rxn}}$ achieves the highest steering breadth across
all three criteria (class accuracy, RXN-Insight name match, and round-trip accuracy).}
\label{fig:steering-breadth-50k}
\end{figure*}

\begin{figure*}[htbp]
    \centering
    \resizebox{0.6\textheight}{!}{%
    \includegraphics[width=\textwidth]{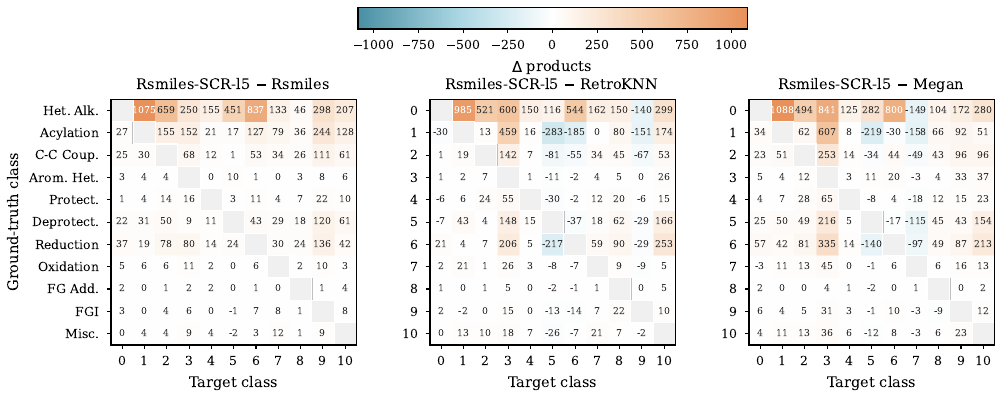}
}%
\caption{Difference between SCR$_{\text{rxn}}$ and baseline results for steering towards all
non-ground-truth reaction types on USPTO-50k. The heatmaps show the difference in the
number of products with precursors from non-ground-truth target classes, stratified per
class, against the unguided generator ablation, RetroKNN (template-based), and Megan
(template-free). As on USPTO-190-Steps, our guidance scheme improves steering rates
for the vast majority of class pairs.}
\label{fig:delta-heatmaps-50k}
\end{figure*}

\subsection{Multi-step synthesis results}
\paragraph{Search-based multi-step Synthesis}
\Cref{tab:search-baselines-all} extends the main-body comparison to all baselines
evaluated with Retro* search on USPTO-190.
With a USPTO-50k checkpoint, S$_{\text{tan}}$ achieves a 95\% solve rate,
far above the best unguided method (Megan at 66\%) and all template-based baselines
(LocalRetro 50\%, RetroKNN 54\%).
Guided methods are also substantially more node-efficient, requiring
${\sim}877$--$1{,}024$ search nodes versus $6{,}000$--$8{,}500$ for
template-based approaches.
With a USPTO-190 checkpoint, both guidance variants reach a 91\% solve rate,
with Tanimoto guidance producing up to 8.22 non-overlapping routes per target
and the highest conditional starting-material rate (35\%).

We further evaluate on two additional benchmarks from the Pistachio dataset.
On \textbf{Pistachio Reachable} (\cref{tab:search-pistachio-reachable}), most methods
already achieve near-ceiling solve rates (95--97\%), so the primary benefit of
guidance is increased route diversity: S$_{\text{tan}}$ discovers
4.45 routes per target compared to 3.86 for the unguided generator, and with
a USPTO-190 checkpoint this grows to 14.22 routes per target---a substantial
diversity gain.
On \textbf{Pistachio Hard} (\cref{tab:search-pistachio-hard}), the benchmark is
more challenging with wider baseline spread (33\%--76\% solve rate).
Tanimoto guidance lifts the generator from 75\% to 89\% (+14\,pp), and with
a USPTO-190 checkpoint reaches 93\% with 10.76 non-overlapping routes per target,
confirming that guidance benefits scale with problem difficulty.

\begin{table}[htb!]
\caption{Search metrics on USPTO-190 for baselines trained on USPTO-50k.}
\label{tab:search-baselines-all}
\centering
\small
\setlength{\tabcolsep}{2pt}
\begin{tabularx}{\columnwidth}{@{}Xccccccccc@{}}
    \toprule
    Method & Rate & Rate w/ SM & N. Routes & R.Trip & Class & Name & Nodes & Calls \\
     & ($\uparrow$) & ($\uparrow$) & ($\uparrow$) & ($\uparrow$) & ($\uparrow$) & ($\uparrow$) & ($\downarrow$) & ($\downarrow$) \\
    \midrule
    \multicolumn{9}{l}{\textit{USPTO-50k}} \\[0pt]
    \quad Localretro & 0.50 & 0.22 & 1.64 & 0.21 & 0.14 & 0.35 & 8075.65 & 39.08 \\
    \quad GLN & 0.24 & 0.11 & 0.39 & 0.12 & 0.06 & 0.18 & 6332.84 & 22.01 \\
    \quad Retroknn & 0.54 & \underline{0.25} & 1.58 & 0.23 & 0.14 & 0.37 & 8013.27 & 37.64 \\
    \quad MHNreact & 0.06 & 0.02 & 0.09 & 0.03 & 0.02 & 0.04 & \textbf{551.09} & \textbf{5.03} \\
    \quad Chemformer & 0.41 & 0.10 & 0.93 & 0.17 & 0.08 & 0.23 & 976.31 & 13.97 \\
    \quad Graph2Edits & 0.54 & 0.20 & 1.19 & 0.25 & 0.12 & 0.35 & 5487.67 & 22.53 \\
    \quad Megan & 0.66 & 0.21 & 1.37 & \textbf{0.27} & 0.13 & \textbf{0.43} & 8505.98 & 34.77 \\
    \quad Generator & 0.17 & 0.01 & 0.26 & 0.06 & 0.02 & 0.08 & 2410.82 & 33.76 \\
    \rowcolor{highlightgreen}
    \quad SCR$_{\text{rxn}}$ & 0.78 & 0.15 & 2.41 & 0.18 & 0.22 & 0.31 & 1023.66 & 7.00 \\
    \rowcolor{highlightgreen}
    \quad SCR$_{\text{tan}}$ & \textbf{0.95} & 0.18 & 3.33 & 0.23 & \textbf{0.24} & \textbf{0.43} & \underline{876.99} & 6.64 \\[0pt]
    \multicolumn{9}{l}{\textit{USPTO-190}} \\[0pt]
    \quad Neuralsym & 0.47 & 0.16 & 0.83 & \underline{0.26} & 0.02 & 0.27 & 17622.49 & 87.96 \\
    \quad Generator & 0.81 & 0.17 & 1.93 & \underline{0.26} & \underline{0.23} & \underline{0.42} & 1071.63 & 9.14 \\
    \rowcolor{highlightgreen}
    \quad SCR$_{\text{rxn}}$ & \underline{0.91} & 0.17 & \underline{4.40} & 0.05 & 0.19 & 0.05 & 1436.35 & \underline{5.58} \\
    \rowcolor{highlightgreen}
    \quad SCR$_{\text{tan}}$ & \underline{0.91} & \textbf{0.35} & \textbf{8.22} & 0.10 & 0.20 & 0.14 & 3613.41 & 9.82 \\
    \bottomrule
\end{tabularx}
\end{table}
\begin{table}[htb!]
\caption{Search metrics on Pistachio Reachable.}
\label{tab:search-pistachio-reachable}
\centering
\small
\setlength{\tabcolsep}{4pt}
\begin{tabular}{lccccccc}
    \toprule
    Method & Rate & Rate w/ SM & N. Routes & R.Trip & Name & Nodes & Calls \\
     & ($\uparrow$) & ($\uparrow$) & ($\uparrow$) & ($\uparrow$) & ($\uparrow$) & ($\downarrow$) & ($\downarrow$) \\
    \midrule
    \multicolumn{8}{l}{\textit{USPTO-50k}} \\[0pt]
    \quad GLN & 0.80 & 0.00 & 2.65 & 0.54 & 0.69 & 6702.69 & 27.65 \\
    \quad MHNreact & 0.54 & 0.21 & 1.43 & 0.36 & 0.46 & \textbf{643.95} & \textbf{4.99} \\
    \quad Graph2Edits & \underline{0.95} & \underline{0.59} & 3.23 & \underline{0.58} & \textbf{0.75} & 5260.79 & 27.93 \\
    \quad Chemformer & 0.87 & 0.40 & 2.08 & 0.51 & 0.64 & \underline{760.91} & 14.67 \\
    \quad Megan & \underline{0.95} & 0.51 & 3.10 & \textbf{0.59} & \underline{0.74} & 9012.99 & 33.34 \\
    \quad Generator & \textbf{0.97} & 0.45 & 3.86 & 0.52 & 0.67 & 1442.49 & 11.80 \\
    \rowcolor{highlightgreen}
    \quad SCR$_{\text{tan}}$ & \textbf{0.97} & 0.35 & \underline{4.45} & 0.49 & 0.60 & 943.20 & \underline{7.38} \\[0pt]
    \multicolumn{8}{l}{\textit{USPTO-190}} \\[0pt]
    \quad NeuralSym & \underline{0.95} & \textbf{0.67} & 3.60 & \textbf{0.59} & 0.73 & 9349.81 & 53.54 \\
    \quad Generator & \textbf{0.97} & \textbf{0.67} & 3.95 & 0.50 & 0.59 & 2083.88 & 13.29 \\
    \rowcolor{highlightgreen}
    \quad SCR$_{\text{tan}}$ & \textbf{0.97} & 0.44 & \textbf{14.22} & 0.17 & 0.21 & 4295.86 & 10.36 \\
    \bottomrule
\end{tabular}
\end{table}
\begin{table}[htb!]
\caption{Search metrics on Pistachio Hard.}
\label{tab:search-pistachio-hard}
\centering
\small
\setlength{\tabcolsep}{4pt}
\begin{tabular}{lccccccc}
    \toprule
    Method & Rate & Rate w/ SM & N. Routes & R.Trip & Name & Nodes & Calls \\
     & ($\uparrow$) & ($\uparrow$) & ($\uparrow$) & ($\uparrow$) & ($\uparrow$) & ($\downarrow$) & ($\downarrow$) \\
    \midrule
    \multicolumn{8}{l}{\textit{USPTO-50k}} \\[0pt]
    \quad GLN & 0.35 & 0.08 & 0.83 & 0.21 & 0.28 & 8789.96 & \textbf{0.00} \\
    \quad Localretro & 0.59 & \underline{0.21} & 2.58 & 0.31 & 0.45 & 8606.85 & 39.53 \\
    \quad Retroknn & 0.59 & 0.20 & 2.41 & 0.33 & 0.45 & 8685.44 & \textbf{0.00} \\
    \quad MHNreact & 0.33 & 0.06 & 0.91 & 0.21 & 0.29 & 1916.76 & 13.78 \\
    \quad Graph2Edits & 0.76 & 0.20 & 1.91 & \textbf{0.37} & \textbf{0.54} & 5110.46 & 23.78 \\
    \quad Megan & 0.70 & 0.16 & 2.03 & \underline{0.34} & \underline{0.48} & 8562.38 & 34.17 \\
    \quad Chemformer & 0.47 & 0.13 & 1.16 & 0.23 & 0.28 & \underline{919.84} & 14.83 \\
    \quad Generator & 0.75 & 0.15 & 2.34 & 0.27 & 0.45 & \textbf{908.60} & 8.05 \\
    \rowcolor{highlightgreen}
    \quad SCR$_{\text{tan}}$ & \underline{0.89} & 0.15 & \underline{3.52} & 0.26 & 0.46 & 995.44 & \underline{7.25} \\[0pt]
    \multicolumn{8}{l}{\textit{USPTO-190}} \\[0pt]
    \quad NeuralSym & 0.55 & \underline{0.21} & 1.98 & 0.31 & 0.39 & 8646.33 & 50.84 \\
    \quad Generator & 0.82 & \textbf{0.31} & 3.15 & 0.32 & 0.46 & 2159.37 & 13.94 \\
    \rowcolor{highlightgreen}
    \quad SCR$_{\text{tan}}$ & \textbf{0.93} & 0.20 & \textbf{10.76} & 0.10 & 0.18 & 4088.90 & 11.05 \\
    \bottomrule
\end{tabular}
\end{table}

\paragraph{On the complementarity of token-guidance and search-level guidance}
We decompose the contribution of three guidance mechanisms:
\textbf{S}~(token-level SCR reranking during beam search),
\textbf{G}~(search-level guidance that modifies the Retro* value function),
and \textbf{F}~(hard filtering that discards candidates not satisfying the target property).
\Cref{tab:search-metrics-combined} reports all individual mechanisms and their combinations.

Filtering alone (F$_{\text{rxn}}$, 21\%) provides only a marginal improvement over
the unguided generator (17\%), because discarding candidates too aggressively limits the
search frontier and prevents the algorithm from finding viable routes.
Search-level guidance alone (G$_{\text{rxn}}$, 76\%) nearly matches token-level
guidance (S$_{\text{rxn}}$, 78\%), confirming that both levels of intervention
are independently effective.
Combining token-level and search-level guidance (SG$_{\text{rxn}}$) yields a 75\%
solve rate---a 3\,pp drop relative to S$_{\text{rxn}}$ alone (78\%)---with marginally
higher route diversity (2.44 vs.\ 2.41 non-overlapping routes per target). The
two mechanisms therefore appear to act on overlapping rather than fully orthogonal
failure modes: token-level guidance already diversifies the single-step candidates,
and adding search-level guidance on top trades a small amount of solve rate for a
small gain in route diversity, rather than producing a clear additive benefit.
However, adding filtering to any combination consistently reduces performance
(SF: 27\%, FG: 24\%, SFG: 25\%), as the hard constraint eliminates candidates
that might lead to viable routes through intermediate steps.
Filtering is therefore best reserved for settings where the chemist wants to
strictly enforce a property constraint at the expense of exploration breadth.

\begin{table}[htb!]
\caption{Search metrics on USPTO-190 for baselines trained on USPTO-50k.}
\label{tab:search-metrics-combined}
\centering
\small
\setlength{\tabcolsep}{2pt}
\begin{tabularx}{\columnwidth}{@{}Xccccccccc@{}}
    \toprule
    Method & Rate & Rate w/ SM & N. Routes & R.Trip & Class & Name & Nodes & Calls \\
     & ($\uparrow$) & ($\uparrow$) & ($\uparrow$) & ($\uparrow$) & ($\uparrow$) & ($\uparrow$) & ($\downarrow$) & ($\downarrow$) \\
    \midrule
    \multicolumn{9}{l}{\textit{Guided}} \\[0pt]
    \quad Generator & 0.17 & 0.01 & 0.26 & 0.06 & 0.02 & 0.08 & 2410.82 & 33.76 \\
    \quad F$_{\text{rxn}}$ & 0.21 & 0.08 & 0.50 & 0.07 & 0.20 & 0.09 & \textbf{143.82} & 7.19 \\
    \quad G$_{\text{rxn}}$ & \underline{0.76} & \textbf{0.15} & 1.97 & \textbf{0.25} & \underline{0.25} & \textbf{0.37} & 923.07 & 8.07 \\
    \quad FG$_{\text{rxn}}$ & 0.24 & 0.00 & 0.70 & 0.07 & 0.00 & 0.09 & \underline{157.91} & 7.38 \\
    \rowcolor{highlightgreen}
    \quad S$_{\text{rxn}}$ & \textbf{0.78} & \textbf{0.15} & \underline{2.41} & \underline{0.18} & 0.22 & \underline{0.31} & 1023.66 & 7.00 \\
    \rowcolor{highlightgreen}
    \quad SF$_{\text{rxn}}$ & 0.27 & 0.07 & 0.76 & 0.06 & \textbf{0.26} & 0.09 & 190.48 & 6.97 \\
    \rowcolor{highlightgreen}
    \quad SG$_{\text{rxn}}$ & 0.75 & \underline{0.12} & \textbf{2.44} & 0.16 & 0.24 & 0.29 & 1001.07 & \underline{6.96} \\
    \rowcolor{highlightgreen}
    \quad SFG$_{\text{rxn}}$ & 0.25 & 0.07 & 0.67 & 0.05 & 0.24 & 0.07 & 199.72 & \textbf{6.72} \\
    \bottomrule
\end{tabularx}
\end{table}

\paragraph{Hard filtering generalizes poorly across baselines.}
The unguided-generator ablation in \cref{tab:search-metrics-combined} suggests that hard
filtering by reaction type is too aggressive a constraint, but one might wonder
whether this is specific to our generator.
\Cref{tab:search-filtered-baselines} repeats the F$_{\text{rxn}}$ intervention
for all eight retrosynthesis baselines (USPTO-50k checkpoints) and contrasts
them with token-level steering (SCR$_{\text{rxn}}$).
The pattern is consistent: every filtered baseline collapses to a solve rate
between 0\% (Megan-F$_{\text{rxn}}$, on which our search jobs only completed for
1/190 targets) and 20\% (F$_{\text{rxn}}$), with template-based
methods that nominally produce class-correct candidates by construction
(GLN, LocalRetro, RetroKNN, NeuralSym) faring no better than template-free
sequence/graph models (Graph2Edits, Chemformer).
Discarding any candidate whose predicted class disagrees with the target
prunes the search frontier so aggressively that Retro* cannot stitch together
multi-step routes regardless of the underlying model family.
Token-level steering, in contrast, only re-weights the candidate distribution
without removing options, so the search retains enough breadth to find routes:
SCR$_{\text{rxn}}$ reaches a 78\% solve rate---roughly $4{\times}$ the
best filtered baseline---while still producing 2.41 non-overlapping routes per
target.
This confirms that the failure mode of hard filtering is structural rather than
model-specific, and that soft, token-level guidance is the right granularity at
which to enforce a property constraint during search.

\begin{table}[htb!]
\caption{Search metrics on USPTO-190 for filtered retrosynthesis baselines (USPTO-50k checkpoints), with token-level steered Generator (SCR$_{\text{rxn}}$) included for comparison. Hard filtering caps every baseline well below a 20\% solve rate, while soft token-level steering reaches 78\%.}
\label{tab:search-filtered-baselines}
\centering
\small
\setlength{\tabcolsep}{2pt}
\begin{tabularx}{\columnwidth}{@{}Xccccccccc@{}}
    \toprule
    Method & Rate & Rate w/ SM & N. Routes & R.Trip & Class & Name & Nodes & Calls \\
     & ($\uparrow$) & ($\uparrow$) & ($\uparrow$) & ($\uparrow$) & ($\uparrow$) & ($\uparrow$) & ($\downarrow$) & ($\downarrow$) \\
    \midrule
    \multicolumn{9}{l}{\textit{USPTO-50k}} \\[0pt]
    \quad F$_{\text{rxn}}$ & \underline{0.20} & \underline{0.09} & \underline{0.47} & 0.06 & \underline{0.18} & 0.09 & \textbf{146.22} & 7.34 \\
    \quad NeuralSym-F$_{\text{rxn}}$ & 0.14 & 0.08 & 0.15 & \underline{0.09} & 0.10 & 0.07 & 497.62 & 36.09 \\
    \quad Megan-F$_{\text{rxn}}$ & 0.00 & 0.00 & 0.00 & 0.00 & 0.00 & 0.00 & 245.00 & 57.00 \\
    \quad Graph2Edits-F$_{\text{rxn}}$ & 0.15 & 0.07 & 0.28 & 0.04 & 0.13 & 0.08 & 393.13 & 18.84 \\
    \quad Chemformer-F$_{\text{rxn}}$ & 0.06 & 0.02 & 0.15 & 0.02 & 0.05 & 0.02 & \underline{207.56} & \underline{6.16} \\
    \quad Retroknn-F$_{\text{rxn}}$ & 0.16 & 0.08 & 0.31 & 0.07 & 0.13 & \underline{0.10} & 845.06 & \textbf{0.00} \\
    \quad Localretro-F$_{\text{rxn}}$ & 0.15 & 0.08 & 0.24 & 0.07 & 0.12 & 0.08 & 829.18 & 38.38 \\
    \quad GLN-F$_{\text{rxn}}$ & 0.04 & 0.02 & 0.04 & 0.01 & 0.04 & 0.02 & 653.96 & \textbf{0.00} \\
    \rowcolor{highlightgreen}
    \quad SCR$_{\text{rxn}}$ & \textbf{0.78} & \textbf{0.15} & \textbf{2.41} & \textbf{0.18} & \textbf{0.22} & \textbf{0.31} & 1023.66 & 7.00 \\
    \bottomrule
\end{tabularx}
\end{table}

\paragraph{Lookahead provides minimal benefit over fixed-scale guidance.}
As shown in \cref{tab:lookahead-ablation-search}, direct guidance with
$\lambda{=}1.0$ (SCR-nla) achieves a solve rate of 78.4\%, substantially
outperforming the standard lookahead variant (42.6\%) while using comparable
wall-clock time (643.6\,s vs 734.3\,s).
The lookahead algorithm's exploration phase consumes model calls on suboptimal
$\lambda$ values, leaving fewer calls for exploitation.
Lookahead-fast recovers to 78.9\% by exhaustively evaluating all parameters,
but at $5.8{\times}$ the computational cost (3758.9\,s).
Route quality follows the same pattern
(\cref{tab:lookahead-ablation-quality}): SCR-nla produces 2.41 non-overlapping
routes per target with 17.9\% round-trip accuracy, compared to 0.83 routes and
6.4\% for SCR-la.

Per-target analysis reveals that all 81 targets solved by SCR-la are also solved
by the union of SCR-nla variants, with zero targets uniquely solved by lookahead.
These results indicate that for reaction-type guidance with $\lambda{=}1.0$,
the fixed-scale strategy is both simpler and more effective than adaptive
parameter selection, and we adopt it as the default in our main results.

\begin{table}[htb!]
\caption{Search metrics: lookahead ablation on USPTO-190.}
\label{tab:lookahead-ablation-search}
\centering
\small
\setlength{\tabcolsep}{2pt}
\begin{tabularx}{\columnwidth}{@{}Xccccccccc@{}}
    \toprule
    Method & Rate & Rate w/ SM & N. Routes & R.Trip & Class & Name & Nodes & Calls \\
     & ($\uparrow$) & ($\uparrow$) & ($\uparrow$) & ($\uparrow$) & ($\uparrow$) & ($\uparrow$) & ($\downarrow$) & ($\downarrow$) \\
    \midrule
    \multicolumn{9}{l}{\textit{USPTO-50k}} \\[0pt]
    \quad Generator (no guidance) & 0.17 & 0.01 & 0.26 & 0.06 & 0.02 & 0.08 & 2410.82 & 33.76 \\
    \rowcolor{highlightgreen}
    \quad SCR-nla g=1.0 & \underline{0.78} & \underline{0.15} & \underline{2.41} & \underline{0.18} & \textbf{0.22} & \underline{0.31} & \textbf{1023.66} & \textbf{7.00} \\
    \rowcolor{highlightgreen}
    \quad SCR-la-fast & \textbf{0.79} & \textbf{0.17} & \textbf{2.42} & \textbf{0.20} & \underline{0.21} & \textbf{0.33} & \underline{1804.40} & \underline{11.93} \\
    \bottomrule
\end{tabularx}
\end{table}
\begin{table}[htb!]
\caption{Quality of routes: lookahead ablation on USPTO-190.}
\label{tab:lookahead-ablation-quality}
\centering
\small
\setlength{\tabcolsep}{4pt}
\begin{tabular}{lcccc}
    \toprule
    Method & Match & R.Trip & Class & Name \\
     & ($\uparrow$) & ($\uparrow$) & ($\uparrow$) & ($\uparrow$) \\
    \midrule
    \multicolumn{5}{l}{\textit{USPTO-50k}} \\[2pt]
    \quad Generator (no guidance) & \underline{0.00} & 0.06 & 0.02 & 0.08 \\
    \rowcolor{highlightgreen}
    \quad SCR-nla g=1.0 & \textbf{0.04} & \underline{0.18} & \textbf{0.22} & \underline{0.31} \\
    \quad SCR-la-fast g=1 (rxntype) & \textbf{0.04} & \textbf{0.20} & \underline{0.21} & \textbf{0.33} \\
    \bottomrule
\end{tabular}
\end{table}

\paragraph{Qualitative analysis of steered routes}
We present detailed case studies in \cref{fig:case-study}, where guidance recovers ground-truth precursors and complete routes that no baseline finds.

\textbf{Wieland-Gumlich Aldehyde (\cref{fig:case-study}a).}
Wieland-Gumlich Aldehyde is an essential intermediate in all known synthesis pathways of Strychnine, a historically significant alkaloid \citep{Genheden2025RouteSimilarity}.
Two notable retrosynthetic disconnections for the aldehyde are deprotection (class~5) and reduction (class~6). While the unguided generator recovers only the reduction precursor, guiding toward deprotection specifically recovers the ground-truth TMS-ether precursor, demonstrating that token-level guidance can unlock alternative disconnection strategies that the base model alone cannot access.

\textbf{Target~49: rxntype guidance (\cref{fig:case-study}b).}
All eight baselines fail to find any valid route for Target~49 despite exploring up to $7{,}483$ search nodes; rxntype guidance steers the model toward a 3-step route via Protection and C-C Coupling, matching the ground-truth reaction class in 2 of 3 steps.

\textbf{Target~169: Tanimoto guidance (\cref{fig:case-study}c).}
Target~169 is likewise unsolved by every baseline. Tanimoto guidance discovers a 3-step route in which the similarity to the designated starting material (penicillamine) increases along the retrosynthetic steps ($0.15 \!\to\! 0.17 \!\to\! 0.60$), reaching a close structural analog at the leaf node.

\textbf{Aggregate.}
Across the full dataset, rxntype and Tanimoto guidance jointly unlock routes for 33 out of 190 targets (17.4\%) that remain inaccessible to all baselines, producing 94 unique routes for the 20 targets solvable only by rxntype alone.

\subsection{Additional empirical justification for token-guidance}
\label{app:token-guidance-needed-additional}

\paragraph{Increasing the beam size achieves diversity at the cost of quality.} \Cref{tab:beamsize_ablation}
compares the unguided generator at beam sizes 10 and 20 against SCR$_{\text{rxn}}$ (guided, beam=10), holding the sample budget fixed at
100 per product. Widening the beam from 10 to 20 improves raw steering rate
(+13.7\%) but degrades rank efficiency: mean rank worsens from 12.1 to 21.0
and top-10 coverage drops from 54.6\% to 42.3\%, as useful predictions are
buried deeper among an inflated candidate set. SCR$_{\text{rxn}}$ matches the steering
rate of unguided beam=20 (63.6 vs.\ 63.2\%) while recovering class-correct
predictions at mean rank 9.0 vs.\ 21.0, with top-10 coverage of 69.0\%
vs.\ 42.3\%. Guidance thus recovers the diversity gains of a wider beam
without the associated rank degradation. 
\begin{figure}[h]
    \centering
    \begin{minipage}{\textwidth}
        \captionof{table}{%
  Beam size ablation on 640 USPTO-190 test products over 11 reaction types. SCR$_{\text{rxn}}$ uses beam=10 with guidance; unguided baselines vary beam width only. \textit{Steer.}\ counts any class-correct prediction in 100 samples; \textit{+RXN-I} further requires RXN-Insight validation; \textit{+RT} additionally requires round-trip accuracy (all in \%). Mean rank is computed over products where at least one class-correct prediction was found.
}
\label{tab:beamsize_ablation}
\vspace{4pt}
\centering
\small
\setlength{\tabcolsep}{5pt}
\begin{tabular}{lcccccc}
\toprule
& \multicolumn{3}{c}{\textit{Steering success (\%)}} & \multicolumn{3}{c}{\textit{Rank efficiency}} \\
\cmidrule(lr){2-4} \cmidrule(lr){5-7}
\textbf{Method} & \textbf{Steer.} & \textbf{+RXN-I} & \textbf{+RT} & \textbf{Mean rank} & \textbf{Top-5 (\%)} & \textbf{Top-10 (\%)} \\
\midrule
Generator (beam=10) & 49.5 & 28.9 & 18.6 & \underline{12.1} & \underline{35.8} & \underline{54.6} \\
Generator (beam=20) & \underline{63.2} & \underline{36.0} & \underline{23.7} & 21.0 & 27.9 & 42.3 \\
\midrule
\rowcolor{highlightgreen} \textbf{SCR$_{\text{rxn}}$} & \textbf{63.6} & \textbf{39.2} & \textbf{24.7} & \phantom{0}\textbf{9.0} & \textbf{43.9} & \textbf{69.0} \\
\bottomrule
\end{tabular}
    \end{minipage}
\end{figure}

\paragraph{Generating more samples is inefficient and ineffective.} \Cref{tab:samplesize_ablation} compares SCR$_{\text{rxn}}$ at 100 samples against template-based and template-free baselines at 600 samples. With the larger budget the unguided generator does reach the highest raw steering breadth (69.9\%), but its class-correct predictions are spread across the full ranked list (mean rank 28.3, top-10 38.8\%); SCR$_{\text{rxn}}$ matches or beats all baselines on rank efficiency despite using $6\times$ fewer samples (mean rank 9.0, top-10 69.0\%) and recovers a class-correct prediction within the top 100 for every product where one exists.
\begin{figure}[h]
    \centering
    \begin{minipage}{\textwidth}
        \captionof{table}{%
  Sample size ablation on 640 USPTO-190 test products over 11 reaction types. SCR$_{\text{rxn}}$ uses 100 samples with guidance; unguided baselines use 600 samples. \textit{Steer.}\ counts any class-correct prediction; \textit{+RXN-I} further requires RXN-Insight validation; \textit{+RT} additionally requires round-trip accuracy (all in \%). Mean rank is computed over products where at least one class-correct prediction was found.
}
\label{tab:samplesize_ablation}
\vspace{4pt}
\centering
\small
\setlength{\tabcolsep}{5pt}
\begin{tabular}{lcccccc}
\toprule
& \multicolumn{3}{c}{\textit{Steering success (\%)}} & \multicolumn{3}{c}{\textit{Rank efficiency}} \\
\cmidrule(lr){2-4} \cmidrule(lr){5-7}
\textbf{Method} & \textbf{Steer.} & \textbf{+RXN-I} & \textbf{+RT} & \textbf{Mean rank} & \textbf{Top-5 (\%)} & \textbf{Top-10 (\%)} \\
\midrule
Localretro-600 & 62.7 & \underline{48.8} & \underline{29.1} & 28.8 & 27.4 & 42.1 \\
RetroKNN-600 & \underline{63.6} & \textbf{50.3} & \textbf{29.7} & 29.1 & 27.9 & 42.2 \\
MHNReact-600 & 45.5 & 37.8 & 21.5 & 15.2 & 36.7 & 54.6 \\
Megan-600 & 61.0 & 39.5 & 25.2 & 35.9 & 27.0 & 40.3 \\
Graph2Edits-600 & 39.4 & 21.8 & 13.9 & 19.8 & 43.0 & 61.3 \\
Chemformer-600 & 28.1 & 13.8 & 10.2 & \underline{11.5} & \textbf{47.8} & \underline{67.0} \\
Generator-600 & \textbf{69.9} & 39.4 & 26.0 & 28.3 & 25.2 & 38.8 \\
\midrule
\rowcolor{highlightgreen} \textbf{SCR$_{\text{rxn}}$} & \underline{63.6} & 39.2 & 24.7 & \phantom{0}\textbf{9.0} & \underline{43.9} & \textbf{69.0} \\
\bottomrule
\end{tabular}
    \end{minipage}
\end{figure}

\subsection{Ablations}
\label{app:ablations}

We provide here the full details behind the ablation results summarised in \cref{sec:ablations}. All experiments use reaction-type steering on USPTO-190-steps (640 products, 100 samples per product). Each ablation varies one hyperparameter while keeping the others at their default values ($\lambda{=}1.0$, $N{=}72$, $L_{\min}{=}5$).

\begin{figure}[t]
    \centering
    \begin{subfigure}[t]{0.48\textwidth}
        \centering
        \includegraphics[width=\textwidth]{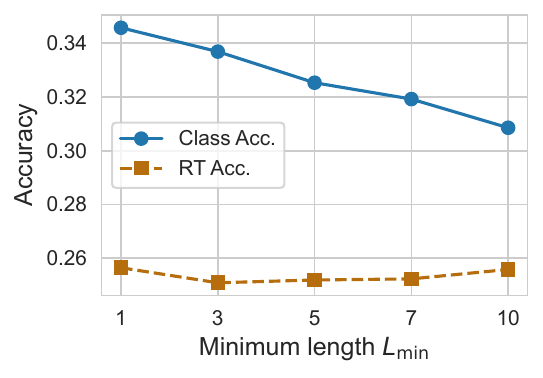}
        \caption{Guidance onset length $L_{\min}$}
        \label{fig:ablation-lmin}
    \end{subfigure}
    \hfill
    \begin{subfigure}[t]{0.48\textwidth}
        \centering
        \includegraphics[width=\textwidth]{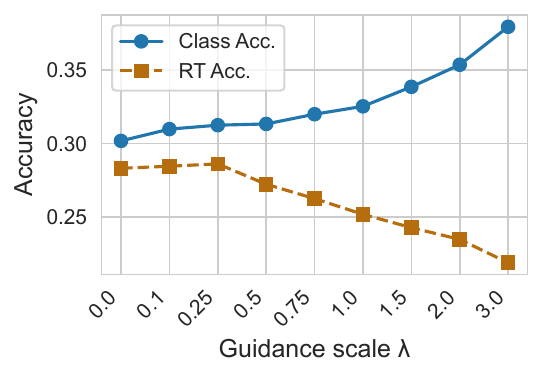}
        \caption{Guidance scale $\lambda$}
        \label{fig:ablation-scale}
    \end{subfigure}
    \caption{Effect of the two main hyperparameters on class accuracy (steering, blue) and round-trip accuracy (validity, orange). \textbf{(a)}~Guidance onset $L_{\min}$: earlier guidance improves class accuracy by ${\sim}3.7$\,pp without degrading round-trip accuracy. \textbf{(b)}~Guidance scale $\lambda$: class accuracy increases by $7.7$\,pp while round-trip accuracy drops by $6.4$\,pp.}
    \label{fig:ablation-combined}
\end{figure}

\paragraph{Number of candidate tokens ($N$)}
We evaluate $N \in \{10, 24, 48, 72\}$. As noted in \cref{sec:ablations}, all metrics are effectively constant across this range: class accuracy varies by less than $0.2$\,pp, and round-trip accuracy by less than $0.2$\,pp. This indicates that the ranking step successfully identifies the best tokens even from a small candidate pool, and that $N$ can be reduced from $72$ to $10$ --- yielding a proportional speedup in the classifier forward passes --- with no measurable loss in quality.

\paragraph{Guidance onset length ($L_{\min}$)}
We evaluate $L_{\min} \in \{1, 3, 5, 7, 10\}$. $L_{\min}$ controls the number of tokens generated by the unguided base model before classifier guidance is applied. Lower $L_{\min}$ means guidance starts earlier. As shown in \cref{fig:ablation-combined}(a), earlier guidance onset monotonically improves class accuracy (from $30.9\%$ at $L_{\min}{=}10$ to $34.6\%$ at $L_{\min}{=}1$, a ${\sim}3.7$\,pp improvement), while round-trip accuracy remains essentially unchanged ($\Delta < 0.6$\,pp). However, guiding from the very first token ($L_{\min}{=}1$) slightly reduces the number of surviving samples, likely because early-token guidance can occasionally push the model toward degenerate sequences.

\paragraph{Guidance scale ($\lambda$)}
We evaluate $\lambda \in \{0.0, 0.1, 0.25, 0.5, 0.75, 1.0, 1.5, 2.0, 3.0\}$. This is the only hyperparameter that produces a meaningful trade-off (\cref{fig:ablation-combined}(b)): as $\lambda$ increases, class accuracy rises from $30.2\%$ to $37.9\%$ ($+7.7$\,pp) while round-trip accuracy drops from $28.3\%$ to $21.9\%$ ($-6.4$\,pp). The divergence between these two curves demonstrates that stronger guidance biases the beam toward class-correct but chemically less faithful sequences. This clear steering--validity trade-off is the central motivation for the adaptive lookahead algorithm (\cref{app:lookahead-inference}), which adjusts $\lambda$ per product to navigate the Pareto frontier.


\end{document}